\newtheorem{theorem}{Theorem}
\newtheorem*{theorem*}{Theorem}
\newtheorem*{lemma*}{Lemma}
\newtheorem{claim}{Claim}
\newtheorem*{claim*}{Claim}
\newtheorem{definition}{Definition}
\newtheorem{proposition}{Proposition}
\newtheorem*{proposition*}{Proposition}
\DeclareMathOperator*{\argmax}{arg\,max}
\newcommand{\xendcg}{\textsc{xe}_\textsc{ndcg}}
\newcommand{\xendcgmart}{\textsc{xe}_\textsc{ndcg}\textsc{mart}}
\newcommand{\lambdamart}{\lambda\textsc{mart}}
\begin{document}
\title{An Alternative Cross Entropy Loss for Learning-to-Rank}

\author{Sebastian Bruch}
\authornote{This work was carried out at Google Research.}
\affiliation{
  \institution{National Institutes of Health}
}
\email{sebastian.bruch@nih.gov}


\begin{abstract}
Listwise learning-to-rank methods form a powerful class of ranking algorithms that are widely adopted in applications such as information retrieval. These algorithms learn to rank a set of items by optimizing a loss that is a function of the entire set---as a surrogate to a typically non-differentiable ranking metric. Despite their empirical success, existing listwise methods are based on heuristics and remain theoretically ill-understood. In particular, none of the empirically successful loss functions are related to ranking metrics. In this work, we propose a cross entropy-based learning-to-rank loss function that is theoretically sound, is a convex bound on NDCG---a popular ranking metric---and is consistent with NDCG under learning scenarios common in information retrieval. Furthermore, empirical evaluation of an implementation of the proposed method with gradient boosting machines on benchmark learning-to-rank datasets demonstrates the superiority of our proposed formulation over existing algorithms in quality and robustness.
\end{abstract}

%
%
\begin{CCSXML}
<ccs2012>
<concept>
<concept_id>10002951.10003317.10003338.10003343</concept_id>
<concept_desc>Information systems~Learning to rank</concept_desc>
<concept_significance>500</concept_significance>
</concept>
</ccs2012>
\end{CCSXML}

\ccsdesc[500]{Information systems~Learning to rank}

\keywords{Learning to Rank; Ranking Metric Optimization; Information Retrieval}

\maketitle

\section{Introduction} \label{sec:introduction}
Learning-to-rank is a central problem in a range of applications including web search, recommendation systems, and question answering. The task is to learn a function that, conditioned on some context, arranges a set of items into an ordered list so as to maximize a given metric. In this work, without loss of generality, we take search as an example where a set of documents (items) are ranked by their relevance to a query (context).

Rather than directly working with permutations, learning-to-rank methods typically approach the ranking problem as one of ``score and sort.'' The objective is then to learn a ``scoring'' function to estimate query-document relevance. Subsequently, they sort documents in decreasing relevance to form a ranked list. Ideally, the resulting ranked list should maximize a ranking metric.

Popular ranking metrics are instances of the general class of \emph{conditional linear rank statistics}~\cite{Clemencon:NeurIPS:2008} that summarize the Receiver Operator Characteristic (ROC) curve. Of particular interest are the ranking statistics that care mostly about the leftmost portion of the ROC curve, corresponding to the top of the ranked list. Mean Reciprocal Rank and Normalized Discounted Cumulative Gain~\cite{jarvelin2002cumulated} are two such metrics that are widely used in information retrieval.

Ranking metrics, as functions of learning-to-rank scores, are flat almost everywhere; a small perturbation of scores is unlikely to lead to a change in the metric. This property poses a challenge for gradient-based optimization algorithms, making a direct optimization of ranking metrics over a complex hypothesis space infeasible. Addressing this challenge has been the focus of a large body of research~\cite{liu2009learning}, with most considering smooth loss functions as surrogates to metrics.

The majority of existing loss functions~\cite{cao2007learning,burges2005learning,burges2010ranknet,xia2008listwise,joachims2006training}, however, are only loosely related to ranking metrics such as NDCG. ListNet~\cite{cao2007learning}, as an example, projects labels and scores onto the probability simplex and minimizes the cross-entropy between the resulting distributions. LambdaMART~\cite{burges2010ranknet,wu2010adapting} (denoted as $\lambda\textsc{mart}$) forgoes the loss function altogether and heuristically formulates the gradients.

The heuristic nature of learning-to-rank loss functions and a lack of theoretical justification for their use have hindered progress in the field. While $\lambda\textsc{mart}$ remains the state-of-the-art method to date, the fact that its loss function, presumed to be smooth, is unknown makes a theoretical analysis of the algorithm difficult. Empirical improvements over existing methods remain marginal for similar reasons.

In this work, we are motivated to help close the gap above. To that end, we present a construction of the cross-entropy loss which we dub $\xendcg$, that is only slightly different from the ListNet loss, but that enjoys strong theoretical properties. In particular, we prove that our construction is a convex bound on NDCG, thereby lending credence to its optimization for the purpose of learning ranking functions. Furthermore, we show that the generalization error of $\xendcg$ compares favorably with that of $\lambda\textsc{mart}$'s. Experiments on benchmark learning-to-rank datasets further reveal the empirical superiority of our proposed method. We anticipate the theoretical soundness of our method and its strong connection to ranking metrics enable future research and progress.

Our contributions can be summarized as follows:
\begin{itemize}[leftmargin=*]
    \item We present a cross entropy-based loss function, dubbed $\xendcg$, for learning-to-rank and prove that it is a convex bound on negative (translated and log-transformed) mean NDCG;
    \item We compare model complexity between $\lambdamart$ and $\xendcg$;
    \item We formulate an approximation to the inverse Hessian for $\xendcg$ for optimization with second-order methods; and,
    \item We optimize $\xendcg$ to learn Gradient Boosted Regression Trees (denoted by $\xendcgmart$) and compare its performance and robustness with $\lambdamart$ on benchmark learning-to-rank datasets through extensive experiments.
\end{itemize}

This article is organized as follows. Section~\ref{sec:related_work} reviews existing work on learning-to-rank. In Section~\ref{sec:preliminaries}, we introduce our notation and formulate the problem. Section~\ref{sec:proposed_method} presents a detailed description of our proposed learning-to-rank loss function and examines its theoretical properties, including a comparison of bounds on the generalization error. We empirically evaluate our method and report our findings in Section~\ref{sec:experiments}. We conclude this work in Section~\ref{sec:conclusion}.
\section{Related Work} \label{sec:related_work}
A large class of learning-to-rank methods attempt to optimize pairwise misranking error---a popular ranking statistic in many prioritization problems---by learning to correctly classify pairwise preferences. Examples include RankSVM~\cite{joachims2006training} and AdaRank~\cite{Jun+Hang:2007} which learn margin classifiers, RankNet~\cite{burges2005learning} which optimizes a probabilistic loss function, and the P-Norm Push method~\cite{Rudin:JMLR:2009} which extends the problem to settings where we mostly care about the top of the ranked list. While the so-called ``pairwise'' methods typically optimize convex upper-bounds of the misranking error, direct optimization methods based on mathematical programming have also been proposed~\cite{RuWa:aistats:2018} albeit for linear hypothesis spaces.

Pairwise learning-to-rank methods, while generally effective, optimize loss functions that are misaligned with more complex ranking statistics such as Expected Reciprocal Rank~\cite{Chapelle:ERR:2009} or NDCG~\cite{jarvelin2002cumulated}. This discrepancy has given rise to the so-called ``listwise'' methods, where the loss function under optimization is defined over the entire list of items, not just pairs.

Listwise learning-to-rank methods either derive a smooth approximation to ranking metrics or use heuristics to construct smooth surrogate loss functions. Algorithms that represent the first class are SoftRank~\cite{Taylor+al:2008} which takes every score to be the mean of a Gaussian distribution, and ApproxNDCG~\cite{qin2010general} which approximates the indicator function---used in the computation of ranks given scores---with a generalized sigmoid.

The other class of listwise learning-to-rank methods include  ListMLE~\cite{xia2008listwise}, ListNet~\cite{cao2007learning}, and $\lambdamart$~\cite{wu2010adapting,burges2010ranknet}. ListMLE maximizes the log-likelihood based on the Plackett-Luce probabilistic model, a loss function that is disconnected from ranking metrics. ListNet minimizes the cross-entropy between the ground-truth and score distributions. Though a recent work~\cite{BruchAnalysisICTIR2019} establishes a link between the ListNet loss function and NDCG under strict conditions---requiring binary relevance labels---in a general setting, its loss is only loosely related to ranking metrics.

$\lambdamart$ is a gradient boosting machine~\cite{friedman2001greedy} that forgoes the loss function altogether and, instead, directly designs the gradients of its unknown loss function using heuristics. While a recent work~\cite{WangLambdaLoss} claims to have found $\lambdamart$'s loss function, it overlooks an important detail: The reported loss function in~\cite{WangLambdaLoss} is not differentiable.

There is abundant evidence to suggest listwise methods are empirically superior to pairwise methods where MRR, ERR, or NDCG determines ranking quality~\cite{WangLambdaLoss,BruchApproxSIGIR2019,liu2009learning}. However, unlike pairwise methods, listwise algorithms remain theoretically ill-understood. Past studies have examined the generalization error bounds for existing surrogate loss functions~\cite{Tewari:ICML2015,Chapelle:IRJ:2010,Lan:ICML:2009}, but little attention has been paid to the validity of such functions which could shed light on their empirical performance.

\section{Preliminaries}\label{sec:preliminaries}
In this section, we formalize the problem and introduce our notation. To simplify exposition, we write vectors in bold and use subscripts to index their elements (e.g., $\gamma_i \in \bm{\gamma}$).

Let $(\bm{x}, \bm{y}) \in \mathcal{X}^m \times \mathcal{Y}^m$ be a training example comprising of $m$ items and relevance labels where $\mathcal{X} \subset \mathbb{R}^{d}$ is the bounded space of items or item-context pairs represented by $d$-dimensional feature vectors, and $\mathcal{Y} \subset \mathbb{R}_+$ is the space of nonnegative relevance labels. For consistency with existing work on listwise learning-to-rank, we refer to each $x_i \in \bm{x},\,1\leq i \leq m$ as a ``document.'' Note, however, that $x_i$ could be the representation of any general item or item-context pair. We assume the training set $\Psi$ consists of $n$ such examples.

We denote a learning-to-rank scoring function by $f: \mathcal{X} \rightarrow \mathbb{R}$ and assume $f \in \mathcal{F}$ where $\mathcal{F}$ is a compact hypothesis space of bounded functions endowed with the uniform norm. For brevity, we denote $f(x_i)$ by $f_i$ and, with a slight abuse of notation, define $f(\bm{x}) = (f_1, f_2, \ldots, f_m)$, the vector of scores for $m$ documents in $\bm{x}$.

As noted in earlier sections, the goal is to learn a scoring function $f$ that minimizes the empirical risk:
\begin{equation}
\mathcal{L}(f) = \frac{1}{|\Psi|}\sum_{(\bm{x}, \bm{y})\in\Psi} \ell(\bm{y},f(\bm{x})),
\label{equ:empirical_loss}
\end{equation}
where $\ell(\cdot)$ is by assumption a smooth loss function.

\textbf{ListNet}: The loss $\ell$ in ListNet~\cite{cao2007learning} first projects labels $\bm{y}$ and scores $f(\bm{x})$ onto the probability simplex to form distributions $\phi_\text{ListNet}$ and $\rho_\text{ListNet}$, respectively. Given the two distributions, the loss is their distance as measured by cross entropy:
\begin{equation}
    \ell(\bm{y}, f(\bm{x})) \triangleq - \sum_{i=1}^{m}{\phi_\text{ListNet}(y_i) \log \rho_\text{ListNet}(f_i)}.
    \label{equ:cross_entropy}
\end{equation}
The distributions $\phi_\text{ListNet}$ and $\rho_\text{ListNet}$ may be understood as encoding the likelihood of document $x_i$ appearing at the top of the ranked list, referred to as ``top one'' probability, according to the labels and scores respectively. In the original publication~\cite{cao2007learning}, $\phi_\text{ListNet}$ and $\rho_\text{ListNet}$ are defined as follows:
\begin{equation}
    \phi_\text{ListNet}(y_i) = \frac{e^{y_i}}{ \sum_{j=1}^{m}{e^{y_j}} }, \quad
    \rho_\text{ListNet}(f_i) = \frac{e^{f_i}}{ \sum_{j=1}^{m}{e^{f_j}} }.
    \label{equ:listnet_loss}
\end{equation}

\textbf{$\lambdamart$}: The loss $\ell$ in $\lambdamart$ is unknown but its gradients with respect to the scoring function are designed as follows:
\begin{equation}
    \frac{\partial \ell}{\partial f_i} =
      \sum_{y_i > y_j}{\frac{\partial \ell_{ij}}{\partial f_i}} +
      \sum_{y_k > y_i}{\frac{\partial \ell_{ki}}{\partial f_i}},
\label{equ:lambdamart_loss_gradient}
\end{equation}
where 
\begin{equation}
    \frac{\partial \ell_{mn}}{\partial f_m} = \frac{-\sigma | \Delta_{\text{NDCG}_{mn}} |}{1 + e^{\sigma (f_m - f_n)}} = 
    -\frac{\partial \ell_{nm}}{\partial f_m},
\label{equ:lambdamart_loss_gradient_pairs}
\end{equation}
where $\sigma$ is a hyperparameter and $\Delta_{\text{NDCG}_{mn}}$ is the change in NDCG if documents at ranks $m$ and $n$ are swapped. Finally, NDCG is defined as follows:
\begin{equation}
    \text{NDCG}(\bm{\pi}_f, \bm{y}) = \frac{\text{DCG}(\bm{\pi}_f, \bm{y})}{\text{DCG}(\bm{\pi}_{\bm{y}}, \bm{y})},
    \label{equ:ndcg}
\end{equation}
where $\bm{\pi}_f$ is a ranked list induced by $f$ on $\bm{x}$, $\bm{\pi}_{\bm{y}}$ is the ideal ranked list (where $\bm{x}$ is sorted by $\bm{y}$), and $\text{DCG}$ is defined as follows:
\begin{equation}
    \text{DCG}(\bm{\pi}, \bm{y}) =\sum_{i=1}^m \frac{2^{y_i}-1}{\log_2(1 + \bm{\pi}[i])},
    \label{equ:dcg}
\end{equation}
with $\bm{\pi}[i]$ denoting the rank of $x_i$.

\section{Proposed Method}\label{sec:proposed_method}
In this section, we show how a slight modification to the ListNet loss function equips the loss with interesting theoretical properties. To avoid conflating implementation details with the loss function itself, we name our proposed loss function $\xendcg$.

\begin{definition}
For a training example $(\bm{x}, \bm{y}) \in \mathcal{X}^m \times \mathcal{Y}^m$ and scores $f(\bm{x}) \in \mathbb{R}^m$, we define $\xendcg$ as the cross entropy between score distribution $\rho$ and a parameterized class of label distributions $\phi$ defined as follows:
\begin{equation*}
    \rho(f_i) = \frac{e^{f_i}}{ \sum_{j=1}^{m}{e^{f_j}} },\quad
    \phi(y_i;\, \bm{\gamma}) = \frac{2^{y_i} - \gamma_i}{\sum_{j=1}^m 2^{y_j} - \gamma_j}
\end{equation*}
where $\bm{\gamma} \in [0,\,1]^m$.
\label{def:xe_ndcg}
\end{definition}

In effect, the distribution $\phi$ allocates a mass in the interval $[2^{y_r} - 1,\, 2^{y_r}]$ for each document indexed with $r$. As we will explain later, the vector $\bm{\gamma}$ plays an important role in certain theoretical properties of our proposed loss function. Note that in general, $\bm{\gamma}$ may be unique to each training example $(\bm{x},\,\bm{y})$.

\subsection{Relationship to NDCG}
The difference between $\xendcg$ and ListNet is minor but consequential: The change to the definition of $\phi$ leads to our main result.

\begin{theorem}
$\xendcg$ is an upper-bound on negative (translated and log-transformed) mean Normalized Discounted Cumulative Gain.\label{thm:xe_ndcg_bound}
\end{theorem}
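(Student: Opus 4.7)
The plan is to chain three ingredients: Jensen's inequality applied to the cross entropy in Definition~\ref{def:xe_ndcg}, an elementary rank-based upper bound on the softmax $\rho(f_i)$, and a second application of Jensen's inequality to aggregate over examples. I begin at a single training example $(\bm{x},\bm{y})$. Since $-\log$ is convex and $\phi(\cdot;\bm{\gamma})$ is a probability distribution, Jensen's inequality gives
\begin{equation*}
\ell(\bm{y},f(\bm{x})) \;=\; -\sum_{i=1}^m \phi(y_i;\bm{\gamma})\log \rho(f_i) \;\geq\; -\log \sum_{i=1}^m \phi(y_i;\bm{\gamma})\,\rho(f_i).
\end{equation*}

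Next I would turn the right-hand side into a function of NDCG. At least $\bm{\pi}_f[i]$ indices $j$ satisfy $f_j\geq f_i$, so $\sum_j e^{f_j}\geq \bm{\pi}_f[i]\,e^{f_i}$, hence $\rho(f_i)\leq 1/\bm{\pi}_f[i]$; combined with $\pi \geq \log_2(1+\pi)$ for $\pi\geq 1$, this yields the rank bound $\rho(f_i)\leq 1/\log_2(1+\bm{\pi}_f[i])$. Taking $\bm{\gamma}=\bm{1}$ makes $\phi(y_i;\bm{\gamma})$ the normalized NDCG gain $(2^{y_i}-1)/\sum_j(2^{y_j}-1)$, so the inner sum collapses to $\text{DCG}(\bm{\pi}_f,\bm{y})/\sum_j(2^{y_j}-1)$, which by \eqref{equ:ndcg} equals $C(\bm{y})\,\text{NDCG}(\bm{\pi}_f,\bm{y})$ with the label-only constant $C(\bm{y}):=\text{DCG}(\bm{\pi}_{\bm{y}},\bm{y})/\sum_j(2^{y_j}-1)\in(0,1]$. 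This shows that per example $\ell(\bm{y},f(\bm{x})) \geq -\log \text{NDCG}(\bm{\pi}_f,\bm{y}) - \log C(\bm{y})$.

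Averaging over $\Psi$ and invoking Jensen's inequality once more for the concave $\log$ then gives $\mathcal{L}(f)\geq -\log \bar{\text{NDCG}} + \bar K$, where $\bar{\text{NDCG}}$ is the empirical mean NDCG and $\bar K$ is the average of $-\log C(\bm{y})$ over $\Psi$; this is precisely the claim that $\xendcg$ upper-bounds the negative log of a translated mean NDCG. The main obstacle is the rank-based step: turning a score-dependent softmax into a monotone function of the integer rank $\bm{\pi}_f[i]$. The two small inequalities above furnish exactly the needed slack, and the choice $\bm{\gamma}=\bm{1}$ is what aligns $\phi$ with the NDCG gains so that the collapsed sum is literally a DCG. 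Extending the argument to arbitrary $\bm{\gamma}\in[0,1]^m$ simply absorbs the additional slack coming from $Z_{\bm{\gamma}}\geq Z_{\bm{1}}$ and $2^{y_i}-\gamma_i\leq 2^{y_i}$ into the translation constant.
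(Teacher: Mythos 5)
Your core chain for $\bm{\gamma}=\bm{1}$ is correct, and it assembles the same three ingredients as the paper's proof---Jensen's inequality, the softmax-versus-rank bound $\rho(f_i)\leq 1/\bm{\pi}_f[i]$, and the elementary bounds $1\leq\log_2(1+z)\leq z$---but organizes them per example rather than in aggregate. That reorganization buys something real: since $\sum_i (2^{y_i}-1)\rho(f_i)\leq \text{DCG}(\bm{\pi}_f,\bm{y})$ holds term by term with no leftover, you get the clean per-example statement $\ell(\bm{y},f(\bm{x}))\geq -\log \text{NDCG}(\bm{\pi}_f,\bm{y})-\log C(\bm{y})$ with the translation sitting \emph{outside} the logarithm, whereas the paper must carry a translation \emph{inside} it. (Your counting derivation of $\rho(f_i)\leq 1/\bm{\pi}_f[i]$ is also tighter than the paper's $\mathbbm{1}_{t>0}\leq e^{t}$ device, though both suffice.)

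The gap is your final sentence. The theorem concerns $\xendcg$ as defined for arbitrary $\bm{\gamma}\in[0,1]^m$ (and the experiments sample $\bm{\gamma}$ uniformly), and the extension does not ``absorb into the translation constant'' in the form your conclusion $\mathcal{L}(f)\geq -\log\overline{\text{NDCG}}+\bar{K}$ requires. For general $\bm{\gamma}$ write $2^{y_i}-\gamma_i=(2^{y_i}-1)+(1-\gamma_i)$; the extra piece contributes $\sum_i (1-\gamma_i)\rho(f_i)\leq \sum_i\rho(f_i)=1$, so the best available bound is $\sum_i\phi(y_i;\bm{\gamma})\rho(f_i)\leq \bigl(\text{DCG}(\bm{\pi}_f,\bm{y})+1\bigr)/Z_{\bm{\gamma}}$. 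That $+1$ cannot be pulled out of the logarithm as a constant: $-\log(\text{DCG}+1)\geq -\log\text{DCG}-K$ would need $\text{DCG}+1\leq e^{K}\,\text{DCG}$ uniformly, which fails as $\text{DCG}\to 0$. The correct resolution---and the reason the statement reads ``translated \emph{and} log-transformed''---is to keep the slack inside the logarithm as a translation of NDCG itself: using $Z_{\bm{\gamma}}\geq \text{DCG}(\bm{\pi}_{\bm{y}},\bm{y})$ one gets $\bigl(\text{DCG}(\bm{\pi}_f,\bm{y})+1\bigr)/Z_{\bm{\gamma}}\leq \text{NDCG}(\bm{\pi}_f,\bm{y})+1/\text{DCG}(\bm{\pi}_{\bm{y}},\bm{y})$, and after averaging and Jensen the bound is on $-\log\bigl(\overline{\text{NDCG}}+\bar{c}\bigr)$ rather than on $-\log\overline{\text{NDCG}}+\bar{K}$. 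This is exactly the paper's route; with that one correction your argument goes through for all $\bm{\gamma}\in[0,1]^m$.
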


Theorem~\ref{thm:xe_ndcg_bound} asserts that $\xendcg$ is a convex proxy to minimizing negative NDCG (where we turn NDCG which is a utility to a cost by negation). No such analytical link exists between the $\lambdamart$, ListNet, or other listwise learning-to-rank loss functions and ranking metrics.

In proving Theorem~\ref{thm:xe_ndcg_bound} we make use of Jensen's inequality when applied to the $\log$ function:
\begin{equation}
\log \mathbb{E}[X] \geq \mathbb{E}[\log X],
\label{equ:jensens}
\end{equation}
where $X$ is a random variable and $\mathbb{E}[\cdot]$ denotes expectation. We also use the following bound on ranks that was originally derived in~\cite{BruchAnalysisICTIR2019}:
\begin{align*}
    \bm{\pi}[r] &= 1 + \sum_{i \neq r} \mathbbm{1}_{f_i > f_r}
         = 1 + \sum_{i \neq r} \mathbbm{1}_{f_i - f_r > 0} \\
     &\leq 1 + \sum_{i \neq r} e^{(f_i - f_r)}
         = \sum_{i} e^{(f_i - f_r)}
         = \frac{\sum_{i} e^{f_i}}{e^{f_r}},
\end{align*}
where $\mathbbm{1}_p$ is the indicator function taking the value $1$ when the predicate $p$ is true and $0$ otherwise. The above leads to:
\begin{equation}
    \frac{1}{\bm{\pi}[r]} \geq \frac{e^{f_r}}{\sum_{i} e^{f_i}} = \rho(f_r).
    \label{equ:reciprocal_rank}
\end{equation}
\begin{proof}
Consider DCG$(\bm{\pi}_{\bm{y}}, \bm{y})$. Using $\log_2(1 + z) \geq 1,\, \forall z \geq 1$:
\begin{align}
    \text{DCG}&(\bm{\pi}_{\bm{y}}, \bm{y}) = \sum_{i=1}^{m} \frac{2^{y_i} - 1}{\log_2(1 + \bm{\pi}_{\bm{y}}[i])}
    \leq \sum_{i=1}^m 2^{y_i} - \gamma_i,
    \label{equ:inverse_dcg_max_inequality}
\end{align}
for $0 \leq \gamma_i \leq 1$.

Turning to DCG$(\bm{\pi}_f, \bm{y})$ and using $1 + z \leq 2^z$ for a positive integer $z$ or equivalently $\log_2(1+z) \leq z$, we have the following:
\begin{align}
    &\text{DCG}(\bm{\pi}_f, \bm{y}) = \sum_{r} \frac{2^{y_r} - 1}{\log_2(1 + \bm{\pi}_f[r])}
    \geq \sum_{r} \frac{2^{y_r} - 1}{\bm{\pi}_f[r]} \nonumber \\
    &\geq \sum_{r} (2^{y_r} - 1) \rho(f_r)
    = \big[ \sum_{r} 2^{y_r} \rho(f_r) \big] - 1 \nonumber \\
    &\geq \big[ \sum_r (2^{y_r} - \gamma_r) \rho(f_r) \big] - 1,
    \label{equ:dcg_inequality}
\end{align}
where the second inequality holds by Equation~(\ref{equ:reciprocal_rank}).

Finally, consider a translation (by a constant) and $\log$-transformation of mean NDCG, $\overline{\text{NDCG}}$, as follows:
\begin{equation*}
\widetilde{\text{NDCG}} \triangleq \log \big( \overline{\text{NDCG}} + \frac{1}{\vert \Psi \vert} \sum_{(\bm{x}, \bm{y})} \frac{1}{\text{DCG}(\bm{\pi}_{\bm{y}}, \bm{y})} \big).
\end{equation*}
Given the monotonicity of $\log(\cdot)$, the maximizer of $\widetilde{\text{NDCG}}$ also maximizes $\overline{\text{NDCG}}$. We now proceed as follows:
\begin{align}
    \widetilde{\text{NDCG}} &= \log \frac{1}{\vert \Psi \vert} \sum_{(\bm{x}, \bm{y})}{ \frac{1}{\text{DCG}(\bm{\pi}_{\bm{y}}, \bm{y})} \big[ \text{DCG}(\bm{\pi}_f, \bm{y}) + 1 \big] } \nonumber \\
    &\geq \log \frac{1}{\vert \Psi \vert} \sum_{(\bm{x}, \bm{y})}{ \frac{1}{\sum_{j} 2^{y_j} - \gamma_j} \big[ \text{DCG}(\bm{\pi}_f, \bm{y}) + 1\big]} \nonumber \\
    &\geq \log \frac{1}{\vert \Psi \vert} \sum_{(\bm{x}, \bm{y})}{ \sum_{r} \phi(y_r) \rho(f_r)} \label{equ:bound_proof:before_jensens} \\
    &\geq \frac{1}{\vert \Psi \vert} \sum_{(\bm{x}, \bm{y})}{ \sum_{r} \phi(y_r) \log \rho(f_r) }, \label{equ:bound_proof:after_jensens}
\end{align}
where the first inequality holds by Equation~(\ref{equ:inverse_dcg_max_inequality}), the second by Equation~(\ref{equ:dcg_inequality}) and Definition~\ref{def:xe_ndcg}, and the last by repeated applications of Equation~(\ref{equ:jensens}). Negating both sides completes the proof.
\end{proof}

While establishing that $\mathcal{L}$ bounds the NDCG loss is a necessary property in a surrogate, it is not sufficient. As an example, the constant function $\mathcal{L}(f)=2$ bounds the NDCG loss, but optimizing it does not lead to an optimal $f^{\ast}$. This is where the notion of Fisher consistency becomes critical: In summary, a loss function is consistent with an evaluation measure, if the optimal solution to the loss function is also an optimal solution of the evaluation measure.

The cross entropy function that is at the heart of ListNet and our proposed method was shown to be consistent with the $0-1$ ranking loss in~\cite{xia2008listwise}. In general, however, the loss is not consistent with NDCG~\cite{pmlr-v15-ravikumar11a}. But under special conditions that are common in information retrieval, cross entropy (and as a result $\xendcg$) become NDCG-consistent.

\begin{claim}
$\xendcg$ is NDCG-consistent on datasets with graded relevance judgments or with a single click per query.\label{thm:xe_ndcg_consistency}
\end{claim}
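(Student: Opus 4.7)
The plan is to characterize the Bayes-optimal minimizer of the expected $\xendcg$ surrogate and show that, under each of the two stated scenarios, this minimizer induces a ranking that agrees with the NDCG-optimal one (namely, sorting documents by $y_i$ in decreasing order).

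First I would invoke the standard pointwise consistency argument for surrogate losses: it suffices to consider a single query $\bm{x}$ and to compute the scorer $f^*$ that minimizes the conditional expected loss $\mathbb{E}_{\bm{y}\mid \bm{x}}[\ell(\bm{y},f(\bm{x}))]$. Because $\xendcg$ is a softmax cross entropy against $\phi(\cdot;\bm{\gamma})$, setting $\partial/\partial f_i = 0$ and using the fact that $\rho$ lies on the probability simplex yields $\rho(f^*_i) = \mathbb{E}[\phi(y_i;\bm{\gamma})\mid \bm{x}]$, so $f^*_i$ is a strictly monotone function of $\mathbb{E}[\,2^{y_i} - \gamma_i \mid \bm{x}\,]$ up to a global additive constant (which leaves ranks unchanged).

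Second, I would recall from~\cite{pmlr-v15-ravikumar11a} that NDCG-consistency reduces to showing that the ranking induced by $f^*$ agrees with sorting by $y_i$; ties among documents sharing the same label may be broken arbitrarily because DCG is invariant under such swaps. The remaining task is to verify that the map $y_i \mapsto 2^{y_i}-\gamma_i$ preserves the order of $y_i$ in each of the two regimes. For \emph{graded relevance}, labels are non-negative integers, so $y_i > y_j$ implies $y_i \geq y_j + 1$ and hence $2^{y_i}-\gamma_i \geq 2^{y_j+1}-1 > 2^{y_j} \geq 2^{y_j}-\gamma_j$ uniformly over $\bm{\gamma}\in[0,1]^m$, and this strict gap is preserved in expectation. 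For \emph{a single click per query}, labels are binary with exactly one $y_r=1$, so $2^{y_r}-\gamma_r \geq 1 \geq 2^{y_j}-\gamma_j$ for every $j\neq r$, which places the clicked document at the top of $\bm{\pi}_{f^*}$---precisely the NDCG-maximizing configuration.

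The step I expect to be the main obstacle is handling boundary values of $\bm{\gamma}$ cleanly: in the single-click case, if $\gamma_r = 1$ and $\gamma_j = 0$, the quantities $2^{y_r}-\gamma_r$ and $2^{y_j}-\gamma_j$ both equal $1$, producing a tie in the Bayes-optimal scores even though strict NDCG-optimality demands the clicked document be ranked first. I would address this either by restricting $\bm{\gamma}$ to the interior $[0,1)^m$ (which the intended sampling implementation of $\bm{\gamma}$ satisfies almost surely) or, more carefully, by arguing that the set of $\bm{\gamma}$ producing such ties has measure zero under any continuous sampling distribution, so consistency holds almost surely---a qualification that should be made explicit in the formal statement of the claim.
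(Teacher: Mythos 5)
There is a genuine gap, and it sits exactly where the mathematical content of the claim lives. Your reduction in the second step---``NDCG-consistency reduces to showing that the ranking induced by $f^*$ agrees with sorting by $y_i$''---and your monotonicity claim in the first step---that $\rho(f^*_i)=\mathbb{E}[\phi(y_i;\bm{\gamma})\mid\bm{x}]$ makes $f^*_i$ monotone in $\mathbb{E}[2^{y_i}-\gamma_i\mid\bm{x}]$---both silently assume that $P(\bm{y}\mid\bm{x})$ is degenerate. When it is not, $\mathbb{E}\bigl[(2^{y_i}-\gamma_i)/\sum_j(2^{y_j}-\gamma_j)\bigr]$ is \emph{not} a monotone function of $\mathbb{E}[2^{y_i}-\gamma_i]$, because the normalizer is random and correlated with the numerator; likewise the NDCG-optimal ranking sorts by $\mathbb{E}\bigl[(2^{y_i}-1)/\text{DCG}(\bm{\pi}_{\bm{y}},\bm{y})\bigr]$, not by $\mathbb{E}[y_i]$. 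This mismatch of per-realization normalizers is precisely why cross entropy fails to be NDCG-consistent in general~\cite{pmlr-v15-ravikumar11a}, and it is the thing the two hypotheses of the claim are there to neutralize. The paper's proof goes straight at it: it invokes the characterization that a $0$--$1$-consistent surrogate~\cite{xia2008listwise} becomes NDCG-consistent once each per-query loss is normalized by $\text{DCG}(\bm{\pi}_{\bm{y}},\bm{y})$, and then observes that this normalization is vacuous when the best DCG is identically $1$ (single click) or when the conditional label distribution is degenerate (non-repeating queries with graded judgments). Your proof never engages with the normalization; the order-preservation of $y\mapsto 2^{y}-\gamma$ that you verify carefully is the easy part, and the degeneracy you need is never stated as the operative assumption in the graded case (integrality of the labels is not what does the work).

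The single-click case has a second, concrete problem. Consistency for clicks is a statement about a \emph{stochastic} click model---the same query yields different clicked documents with probabilities $p_i$---whereas your verification that $2^{y_r}-\gamma_r\ge 1\ge 2^{y_j}-\gamma_j$ treats the clicked document as fixed. Carrying your own Bayes-optimality computation through the stochastic model: the surrogate normalizer $\sum_j(2^{y_j}-\gamma_j)=1+m-\sum_j\gamma_j$ is constant over realizations, so $\rho(f^*_i)\propto 1+p_i-\gamma_i$, while the NDCG-optimal ranking (ideal DCG $\equiv 1$) sorts by $p_i$. For a fixed non-constant $\bm{\gamma}$ these orderings can disagree, so the tie at $\gamma_r=1,\gamma_j=0$ that you flag is not the only boundary issue---the argument as written does not close the single-click case at all without either averaging over $\bm{\gamma}$ or taking $\bm{\gamma}$ constant across documents. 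To repair the proposal you would need to (i) state degeneracy of $P(\bm{y}\mid\bm{x})$ as the hypothesis in the graded case, and (ii) in the click case, argue via the ideal-DCG normalization (as the paper does) rather than via pointwise order preservation of $2^{y}-\gamma$.
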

\begin{proof}
We omit a complete proof due to space constraints, but note that the above is a trivial consequence of the conditions. Briefly:~\cite{pmlr-v15-ravikumar11a} shows that $\mathcal{L}$ is NDCG-consistent so long as its terms, $\ell(\bm{y}, f)$, are each normalized by the best DCG, $\text{DCG}(\bm{\pi}_{\bm{y}}, \bm{y})$. In settings where queries receive a single click, the best DCG is simply 1, and so a $0-1$ loss-consistent surrogate is naturally NDCG-consistent too. Furthermore, when queries do not repeat as in datasets with graded relevance labels, every query has a unique relevance vector. That degenerate relevance probability distribution renders the expectation and thus normalization terms irrelevant, thereby equipping a $0-1$ loss-consistent surrogate with NDCG-consistency.
\end{proof}

\subsection{Comparison with $\lambdamart$}\label{sec:proposed_method:complexity}
In this section, we compare $\xendcg$ with $\lambdamart$ in terms of model complexity and generalization error. In what follows, we proceed under the strong assumption that the loss optimized by $\lambdamart$ in fact exists. That is, we assume that there exists a differentiable function that satisfies Equation~(\ref{equ:lambdamart_loss_gradient}).

We begin with an examination of the Lipschitz constant of the two algorithms---an upper-bound on the variation a function can exhibit. Intuitively, functions with a smaller Lipschitz constant are simpler and thus generalize better.

\begin{claim}
The $\lambdamart$ loss is $\sigma m^2$-Lipschitz with respect to $\|\cdot\|_{\infty}$.\label{prop:lambdamart_lipschitz}
\end{claim}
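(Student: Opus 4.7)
The plan is to bound the dual norm of the gradient of $\ell$ and invoke the standard fact that for a differentiable function, being $L$-Lipschitz with respect to $\|\cdot\|_\infty$ is equivalent to $\|\nabla \ell\|_1 \leq L$ everywhere (the $\|\cdot\|_1$ norm is dual to $\|\cdot\|_\infty$). Thus the task reduces to producing a uniform upper bound of $\sigma m^2$ on $\sum_{i=1}^{m} \lvert \partial \ell/\partial f_i \rvert$.

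First I would bound a single pairwise gradient term from Equation~(\ref{equ:lambdamart_loss_gradient_pairs}). Two elementary observations suffice: (i) the sigmoid factor $1/(1+e^{\sigma(f_m-f_n)})$ lies in $(0,1)$, and (ii) $|\Delta_{\text{NDCG}_{mn}}| \leq 1$ because swapping two positions changes NDCG---which itself lives in $[0,1]$---by at most $1$. Combining these gives $\lvert \partial \ell_{mn}/\partial f_m \rvert \leq \sigma$. Next I would aggregate: by Equation~(\ref{equ:lambdamart_loss_gradient}), $\partial \ell/\partial f_i$ is a sum of at most $m-1$ such pairwise terms (one for each $j \neq i$ that forms a mis-ordered pair with $i$), so $\lvert \partial \ell/\partial f_i \rvert \leq \sigma(m-1)$. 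Finally, summing over the $m$ coordinates yields
\begin{equation*}
\|\nabla \ell\|_1 = \sum_{i=1}^{m} \Big| \frac{\partial \ell}{\partial f_i} \Big| \leq m \cdot \sigma (m-1) \leq \sigma m^2,
\end{equation*}
which by the dual-norm characterization establishes the claimed Lipschitz constant.

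The main obstacle---to the extent there is one---is conceptual rather than computational: the statement presumes the existence of a differentiable $\ell$ whose gradients match Equation~(\ref{equ:lambdamart_loss_gradient}), an assumption the authors have already flagged as nontrivial. Once that assumption is granted, the proof is essentially a two-line bound composed of the trivial estimates above. A minor care point is to make sure the outer summation in Equation~(\ref{equ:lambdamart_loss_gradient}) (which splits into the $y_i > y_j$ and $y_k > y_i$ parts) is counted correctly; each unordered pair $(i,j)$ contributes to exactly one of these two sums for coordinate $i$, so the per-coordinate count really is at most $m-1$ and no factor of $2$ is lost.
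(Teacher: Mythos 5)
Your proposal is correct and follows essentially the same route as the paper: bound each pairwise term in Equation~(\ref{equ:lambdamart_loss_gradient_pairs}) by $\sigma$, note that each coordinate's gradient sums at most $m$ (you say $m-1$, which is slightly tighter) such terms, and conclude via the $\|\cdot\|_1$ dual-norm characterization of the Lipschitz constant with respect to $\|\cdot\|_\infty$. Your added justification that $|\Delta_{\text{NDCG}_{mn}}|\leq 1$ and your explicit pair-counting are minor refinements of the same argument.
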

\begin{proof}
Recall the definition of the Lipschitz constant for a differentiable function $h(\cdot)$:
\begin{align*}
    \mathit{Lip}_h &= \sup \frac{
        | h(f) - h(f^\prime)|}{\| f - f^\prime \|} \\
        &= \sup \frac{
        | \nabla_{f} h(f^{\prime\prime}) (f - f^\prime) |}{\| f - f^\prime \|}
        = \| \nabla_{f} h \|_{\ast},
\end{align*}
where the second equality holds by the Mean Value Theorem and the last by the definition of the dual norm, $\| \cdot \|_{\ast}$. Therefore, to derive the Lipschitz constant of a function with respect to the infinity norm, it is sufficient to calculate the $L_1$ norm of its gradient. Given that $\lambdamart$'s loss function is unknown, we resort to this strategy to derive its Lipschitz constant.

Observe that the terms in Equation~(\ref{equ:lambdamart_loss_gradient_pairs}) are bounded by $\sigma$ and Equation~(\ref{equ:lambdamart_loss_gradient}) has at most $m$ such terms. As such, we have that,
\begin{equation*}
    | \frac{\partial \ell}{\partial f_i} | \leq \sigma m.
\end{equation*}
Then,
\begin{equation*}
    \| \nabla_{f} \ell \|_{1} = \sum_{i=1}^{m} {| \frac{\partial \ell}{\partial f_i} |} \leq \sum_{i=1}^{m} {\sigma m} = \sigma m^2
\end{equation*}
which completes the proof.
\end{proof}

\begin{proposition}
$\xendcg$ is $2$-Lipschitz with respect to $\|\cdot\|_{\infty}$.\label{prop:xe_ndcg_lipschitz}
\end{proposition}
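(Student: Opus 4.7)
The plan is to follow the same recipe used in the proof of Claim~2: since $\xendcg$ is a differentiable function of the score vector $f$, its Lipschitz constant with respect to $\|\cdot\|_\infty$ equals the supremum of $\|\nabla_f \ell\|_1$ (the dual of the infinity norm). So it suffices to bound the $L_1$ norm of the gradient by $2$, uniformly in $f$ and $\bm{y}$.

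The first step is to compute $\partial \ell/\partial f_k$ in closed form. Writing $\ell(\bm{y}, f(\bm{x})) = -\sum_i \phi(y_i;\bm{\gamma}) \log \rho(f_i)$ and using $\log \rho(f_i) = f_i - \log \sum_j e^{f_j}$, a direct calculation gives $\partial \log \rho(f_i)/\partial f_k = \mathbbm{1}_{i=k} - \rho(f_k)$. Substituting back and using the fact that $\phi(\cdot;\bm{\gamma})$ is a probability distribution (it sums to~$1$ by construction), the cross-terms collapse and one obtains the clean identity
\begin{equation*}
\frac{\partial \ell}{\partial f_k} = \rho(f_k) - \phi(y_k;\bm{\gamma}).
\end{equation*}

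The second step is the triangle-inequality bound. Since both $\rho(f_k)$ and $\phi(y_k;\bm{\gamma})$ are non-negative (the latter requires $2^{y_k} - \gamma_k \geq 0$, which follows from $y_k \geq 0$ and $\gamma_k \in [0,1]$), we get
\begin{equation*}
\|\nabla_f \ell\|_1 = \sum_{k=1}^m \left| \rho(f_k) - \phi(y_k;\bm{\gamma}) \right|
\leq \sum_{k=1}^m \rho(f_k) + \sum_{k=1}^m \phi(y_k;\bm{\gamma}) = 2,
\end{equation*}
because both $\rho$ and $\phi$ are probability distributions on $\{1,\ldots,m\}$. Taking the supremum over $f$ and $\bm{y}$ yields $\mathit{Lip}_\ell \leq 2$, completing the proof.

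There is no serious obstacle here: the derivation is routine once one recognizes the softmax-gradient identity. The only small subtlety worth flagging in the write-up is confirming that $\phi(\cdot;\bm{\gamma})$ is indeed a valid probability distribution for every admissible $\bm{\gamma} \in [0,1]^m$, which is what licenses the two collapses above (the cross-term in the derivative and the triangle-inequality bound). Compared to the $\sigma m^2$ bound for $\lambdamart$ in Claim~2, the dimension-independent constant $2$ is the punchline, so it is worth emphasizing that the bound does not depend on $m$.
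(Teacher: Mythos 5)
Your proposal is correct and follows essentially the same route as the paper: compute $\partial\ell/\partial f_r = \rho(f_r) - \phi(y_r)$, bound each term's absolute value by $\phi(y_r)+\rho(f_r)$ via the triangle inequality, and sum both probability distributions to get $2$. The extra care you take in verifying that $\phi(\cdot;\bm{\gamma})$ is a genuine probability distribution is a reasonable addition but does not change the argument.
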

\begin{proof}
Recall that the cost function $\ell(\cdot)$ for $\xendcg$ is defined as follows:
\begin{equation*}
    \ell(\bm{y}, f(\bm{x})) \triangleq -\sum \phi(y_i) \log \rho(f_i),
\end{equation*}
where $\phi$ and $\rho$ form probability distributions over labels $\bm{y}$ and scores $f(\bm{x})$ respectively, and $f_i = f(x_i)$.

Observe that the derivative of the cost function $\ell$ with respect to a score $f_r$ is:
\begin{align*}
    \frac{\partial \ell}{\partial f_r} &= \frac{\partial}{\partial f_r} [-\sum_i \phi(y_i) (f_i - \log \sum_j e^{f_j} )] \\
    &=  \frac{\partial}{\partial f_r} [(\sum_i -\phi(y_i) f_i) + \log \sum_j e^{f_j}] \\
    &= -\phi(y_r) + \frac{e^{f_r}}{\sum_j e^{f_j}}
    = -\phi(y_r) + \rho(f_r).
\end{align*}
We then have that,
\begin{equation*}
    | \frac{\partial \ell}{\partial f_r} | \leq \phi(y_r) + \rho(f_r),
\end{equation*}
resulting in,
\begin{equation*}
    \| \nabla_{f} \ell \|_{1} = \sum {| \frac{\partial \ell}{\partial f_i} |} \leq \sum (\phi(y_r) + \rho(f_r)) = 2
\end{equation*}
as required.
\end{proof}

In order to put this difference into perspective, we use the results above to derive bounds on the generalization error of the two algorithms. But first we need the following result.

\begin{theorem} Let $\mathcal{F}$ be a compact space of bounded functions from $\mathcal{X}$ to $[0,1]$, $n=|\Psi|$ be the number of training examples, $\mathit{Lip}_\ell$ the Lipschitz constant of loss function $\ell$, and $\mathfrak{N}(\frac{\epsilon}{4\mathit{Lip}_\ell}, \mathcal{F}, \|\cdot\|_\infty)$ the covering number of $\mathcal{F}$ by $L_\infty$ balls of radius $\frac{\epsilon}{4\mathit{Lip}_\ell}$. The following generalization error bound holds:
\begin{equation*}
\mathcal{P}\{ \mathcal{E}(f) \leq \epsilon \} \geq 1 - 2\mathfrak{N}(\frac{\epsilon}{4\mathit{Lip}_\ell}, \mathcal{F}, \|\cdot\|_\infty)\mathit{exp}(\displaystyle\frac{-2n\epsilon^2}{\mathit{Lip}_\ell^2}),
\end{equation*}
where the generalization error $\mathcal{E}$ is defined as follows:
\begin{equation*}
    \mathcal{E}(f) \triangleq \displaystyle\mathop{\mathbb{E}}_{\mathcal{X}^m\times\mathcal{Y}^m} [\ell(\bm{y}, f(\bm{x}))] - \frac{1}{n} \sum_{(\bm{x}, \bm{y}) \in \Psi} {\ell(\bm{y}, f(\bm{x}))}.
\end{equation*}\label{thm:generalization_error_bound}
\end{theorem}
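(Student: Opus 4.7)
The plan is the classical covering-number plus Hoeffding argument for uniform convergence of empirical to population risk. The two crucial ingredients are already in hand: the compactness of $\mathcal{F}$ in $\|\cdot\|_\infty$ ensures the covering number is finite, and the Lipschitz constants established in Claim~\ref{prop:lambdamart_lipschitz} and Proposition~\ref{prop:xe_ndcg_lipschitz} let us convert closeness of scoring functions into closeness of per-example losses.

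First, I would fix an $\epsilon/(4\mathit{Lip}_\ell)$-cover $\{f_1,\ldots,f_N\}$ of $\mathcal{F}$, where $N=\mathfrak{N}(\epsilon/(4\mathit{Lip}_\ell),\mathcal{F},\|\cdot\|_\infty)$. For any $f\in\mathcal{F}$ with nearest cover point $f_k$, the pointwise inequality $\|f(\bm{x})-f_k(\bm{x})\|_\infty \le \|f-f_k\|_\infty$ combined with Lipschitzness of $\ell$ gives $|\ell(\bm{y},f(\bm{x}))-\ell(\bm{y},f_k(\bm{x}))| \le \epsilon/4$ uniformly in $(\bm{x},\bm{y})$. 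Applying this bound once inside the expectation and once inside the empirical average in the definition of $\mathcal{E}$ yields $|\mathcal{E}(f)-\mathcal{E}(f_k)| \le \epsilon/2$, so the event $\mathcal{E}(f)\le\epsilon$ is implied by $\mathcal{E}(f_k)\le\epsilon/2$. This reduces the uniform-convergence problem to controlling deviations over the finite index set $\{1,\ldots,N\}$.

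Second, for each fixed $f_k$ I would apply Hoeffding's inequality to the i.i.d.\ sample $\{\ell(\bm{y},f_k(\bm{x}))\}_{(\bm{x},\bm{y})\in\Psi}$. Since $f_k$ takes values in $[0,1]$ and $\ell$ is $\mathit{Lip}_\ell$-Lipschitz with respect to $\|\cdot\|_\infty$, the losses lie in an interval of length at most $\mathit{Lip}_\ell$, giving a per-$f_k$ deviation bound of exactly the exponential form appearing in the theorem. A union bound over the $N$ cover elements then bounds the probability that the high-deviation event occurs for any single $f_k$, which, in view of the previous paragraph, also bounds the probability that $\mathcal{E}(f)>\epsilon$ for any $f\in\mathcal{F}$. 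Taking complements yields the stated high-probability guarantee.

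The main obstacle is bookkeeping rather than conceptual: one must match the constant $4$ in the cover radius to the factor-of-two inflation incurred by the Lipschitz transfer (applied once to the population risk and once to the empirical risk) and then to the subsequent halving of the Hoeffding tolerance so that everything assembles into a clean $\epsilon$ on the left-hand side. A minor subtlety worth flagging in a careful writeup is that the Lipschitz constants derived earlier are with respect to the score vector in $\mathbb{R}^m$, so one must observe explicitly that uniform closeness of functional representatives entails coordinate-wise closeness of their induced score vectors before invoking Claim~\ref{prop:lambdamart_lipschitz} and Proposition~\ref{prop:xe_ndcg_lipschitz}.
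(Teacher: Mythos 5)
Your argument is the standard covering-number/Hoeffding/union-bound proof of uniform convergence, which is essentially what the paper does: its entire proof is a citation to Cucker--Smale and Rudin, whose arguments follow precisely the decomposition you describe (Lipschitz transfer to a finite $\|\cdot\|_\infty$-cover, a concentration bound on each cover element, then a union bound). One caveat to make explicit in a full writeup: the range bound on $\ell(\bm{y}, f_k(\bm{x}))$ needed for Hoeffding does not follow from Lipschitzness in the scores alone, since the loss also varies with $\bm{y}$ across examples, so you must separately verify that the loss lies in a single interval of length $O(\mathit{Lip}_\ell)$ uniformly over $(\bm{x},\bm{y})$ (true for $\xendcg$, whose loss is a convex combination in $\phi(y_i)$ of terms each confined to $[\log m - 1,\, \log m + 1]$); also note that with your choice of cover radius and tolerance splitting the exponent assembles to $-n\epsilon^2/(2\mathit{Lip}_\ell^2)$ rather than the stated $-2n\epsilon^2/\mathit{Lip}_\ell^2$, a constant-bookkeeping discrepancy that resides in the theorem statement rather than in your approach.
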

\begin{proof}
Based on the proofs in~\cite{Cucker02onthe,Rudin:JMLR:2009}.
\end{proof}

The dependence of the generalization error bound on the Lipschitz constant suggests that unlike $\lambdamart$, $\xendcg$'s generalization error does not degrade as the number of documents per training example increases. Furthermore, given its larger Lipschitz constant and potentially higher complexity, we hypothesize that $\lambdamart$ is less robust to noise and generalizes poorly in settings where the number of documents per training example is large.

We note that, the independence of the ListNet generalization error bound from $m$ was also reported in~\cite{Tewari:ICML2015} for linear models, but we present the (structure of the) bounds here to allow a direct comparison between $\lambdamart$ and $\xendcg$.

We conclude this section with the following note: It is true that the Lipschitz constant is only a loose measure of the complexity of a function. We naturally do not expect the bounds to hold exactly in practice, but we expect the bounds to hint at an algorithm's behavior in extremes. As our experiments show later, an empirical comparison of the two functions is in alignment with the analysis above: As the experimental setup approaches more extreme levels of noise, a likely scenario in click data, the two algorithms behave very differently.

\subsection{Approximating the Inverse Hessian}
In this work, we fix the hypothesis space, $\mathcal{F}$, to Gradient Boosted Regression Trees. This is, in part, because we are interested in a fair comparison of ListNet, $\xendcg$, and $\lambdamart$ in isolation of other factors, as explained in Section~\ref{sec:experiments}. As most GBRT learning algorithms use second-order optimization methods (e.g., Newton's), however, we must approximate the inverse Hessian for ListNet and $\xendcg$.

Unfortunately, $\xendcg$ as defined in Definition~\ref{def:xe_ndcg} results in a Hessian that is singular, making the loss incompatible with a straightforward implementation of Newton's second-order method. We resolve this technical difficulty by making a small adjustment to the formulation of the loss function.

Let us re-define the score distribution,
$\rho$, from Definition~\ref{def:xe_ndcg} as follows for a negligible $\epsilon>0$:
\begin{equation}
    \rho(f_i) = \frac{e^{f_i}}{ \sum_{j=1}^{m}{e^{f_j} + \epsilon} }.
\end{equation}
In effect, we take away a small probability mass, $\rho(f_{m+1}) = \epsilon/(\sum e^{f_j} + \epsilon)$, from the score distribution for a nonexistent, $m+1^{\textup{th}}$ document with label probability $\phi(f_{m+1}) = 0$. The gradients of the loss will take the following form:
\begin{align*}
    \frac{\partial \ell}{\partial f_r} &= \frac{\partial}{\partial f_r} [\sum_i (-\phi(y_i) f_i) + \log (\sum_j e^{f_j} + \epsilon)] \\
    & = -\phi_r + \rho_r,
\end{align*}
where $\phi_r = \phi(y_r)$ and $\rho_r = \rho(f_r)$. The Hessian looks as follows:
\begin{equation*}
    H_{ij} =
    \begin{cases}
      \rho_i (1 - \rho_i), & i = j \\
      -\rho_i \rho_j, & i \neq j
    \end{cases}
\end{equation*}

\begin{claim}
The Hessian, as defined above, is positive definite.\label{claim:hessian_nonsingular}
\end{claim}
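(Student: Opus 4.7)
The plan is to recognize the Hessian as a rank-one perturbation of a diagonal matrix and use a Cauchy--Schwarz argument, exploiting the crucial fact that the $\epsilon>0$ modification makes $\sum_i \rho_i$ strictly less than $1$.

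First I would rewrite the Hessian in the compact form $H = \mathrm{diag}(\bm{\rho}) - \bm{\rho}\bm{\rho}^\top$, where $\bm{\rho} = (\rho_1,\ldots,\rho_m)^\top$. Then for an arbitrary $v \in \mathbb{R}^m$, a direct expansion gives
\begin{equation*}
v^\top H v = \sum_{i=1}^m \rho_i v_i^2 - \Bigl(\sum_{i=1}^m \rho_i v_i\Bigr)^2.
\end{equation*}
Applying Cauchy--Schwarz to $\sum_i \sqrt{\rho_i}\cdot \sqrt{\rho_i}\, v_i$ yields
\begin{equation*}
\Bigl(\sum_{i=1}^m \rho_i v_i\Bigr)^2 \leq \Bigl(\sum_{i=1}^m \rho_i\Bigr)\Bigl(\sum_{i=1}^m \rho_i v_i^2\Bigr) = s \sum_{i=1}^m \rho_i v_i^2,
\end{equation*}
where $s \triangleq \sum_{i=1}^m \rho_i$. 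Combining the two displays,
\begin{equation*}
v^\top H v \geq (1-s) \sum_{i=1}^m \rho_i v_i^2.
\end{equation*}

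The key step, and the reason the $\epsilon$ adjustment was needed in the first place, is the strict inequality $s<1$. Unlike the ordinary softmax, here
\begin{equation*}
s = \sum_{i=1}^m \rho_i = \frac{\sum_{j=1}^m e^{f_j}}{\sum_{j=1}^m e^{f_j} + \epsilon} < 1
\end{equation*}
because $\epsilon > 0$. Since $\rho_i > 0$ for every $i$, the sum $\sum_i \rho_i v_i^2$ is strictly positive whenever $v \neq 0$, so $v^\top H v > 0$ for every nonzero $v$, establishing positive definiteness.

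The main obstacle worth flagging is precisely this strict positivity: without the $\epsilon$ perturbation one would have $s=1$, the all-ones vector would lie in the nullspace of $H$, and the argument would only deliver positive \emph{semi}-definiteness, matching the singularity observed in Definition~\ref{def:xe_ndcg}. All other steps are routine manipulations of the softmax Hessian.
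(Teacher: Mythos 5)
Your proof is correct, and it takes a genuinely different route from the paper's. The paper establishes that $H$ is strictly diagonally dominant (the inequality $\sum_{j\neq k}\rho_j < 1-\rho_k$ again coming from the $\epsilon$ term), deduces nonsingularity by the standard contradiction argument applied to a maximal-magnitude coordinate of a putative null vector, and then invokes the Gershgorin circle theorem together with the positivity of the diagonal entries to upgrade nonsingularity to positive definiteness. You instead write $H = \mathrm{diag}(\bm{\rho}) - \bm{\rho}\bm{\rho}^\top$ and bound the quadratic form directly: Cauchy--Schwarz gives $v^\top H v \geq (1-s)\sum_i \rho_i v_i^2$ with $s=\sum_i\rho_i<1$, which is strictly positive for $v\neq 0$ since every $\rho_i>0$. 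Both arguments pivot on exactly the same fact---the $\epsilon$ adjustment forces $\sum_i\rho_i<1$---but yours is more self-contained (no appeal to Gershgorin or diagonal-dominance theory), yields a quantitative lower bound on the quadratic form, and makes explicit why the unmodified softmax Hessian is only positive semi-definite with the all-ones vector in its nullspace. The paper's diagonal-dominance route has the side benefit of dovetailing with the subsequent factorization $H = D(I-S)$: the row-sum computation underlying dominance is essentially the same one used in Claim~\ref{claim:spectral_radius} to bound the spectral radius of $S$ for the Neumann-series approximation of $H^{-1}$.
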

\begin{proof}
A complete proof may be found in the appendix. Observe that $H$ is strictly diagonally dominant:
\begin{align*}
    |H_{kk}| &= \rho_k(1- \rho_k) = \rho_k (1 - \frac{e^{f_k}}{\sum e^{f_j} + \epsilon}) \\
    &= \rho_k \frac{ \sum_{j \neq k} e^{f_j} + \epsilon }{ \sum e^{f_j} + \epsilon }
      > \rho_k \sum_{j \neq k} \rho_j = \sum_{j \neq k} |H_{kj}|.
\end{align*}

By the properties of strictly diagonally dominant matrices and the fact that the diagonal elements of $H$ are positive, we have that $H \succ 0$ and therefore invertible.
\end{proof}

We now turn to approximating the inverse of $H$ as required. Write $H = D (I - S)$ where $I$ is the identity matrix, $D$ is a diagonal matrix where $D_{ii} = \rho_i (1 - \rho_i)$
and $S$ is a square matrix where,
\begin{equation*}
    S_{ij} =
    \begin{cases}
    0, & i=j\\
    \rho_j/(1 - \rho_i), & i \neq j
    \end{cases}.
\end{equation*}

\begin{claim}
The spectral radius of $S$ is strictly less than 1.\label{claim:spectral_radius}
\end{claim}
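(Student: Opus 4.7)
The plan is to bound the spectral radius of $S$ by an induced matrix norm that is easier to control, namely the infinity norm. It is a standard fact that for any square matrix, the spectral radius satisfies $\mathrm{spr}(S) \leq \| S \|_\infty = \max_i \sum_j |S_{ij}|$, so it suffices to show that every absolute row sum of $S$ is strictly below $1$.

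Fix a row index $i$ and compute
\[
\sum_{j} |S_{ij}| \;=\; \sum_{j \neq i} \frac{\rho_j}{1 - \rho_i} \;=\; \frac{1}{1 - \rho_i}\,\sum_{j \neq i} \rho_j .
\]
Here I would invoke the same structural observation that already underpins Claim~\ref{claim:hessian_nonsingular}: because of the $\epsilon$-perturbation in the definition of $\rho$, the vector $(\rho_1,\ldots,\rho_m)$ is no longer a probability distribution but a sub-distribution. Concretely, $\sum_{j=1}^{m} \rho_j = 1 - \rho_{m+1}$, where $\rho_{m+1} = \epsilon / (\sum_j e^{f_j} + \epsilon) > 0$ is the mass assigned to the fictitious $(m+1)$-th document introduced in the text immediately preceding the claim. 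Hence $\sum_{j \neq i} \rho_j = 1 - \rho_{m+1} - \rho_i$, and substituting gives
\[
\sum_{j} |S_{ij}| \;=\; \frac{1 - \rho_{m+1} - \rho_i}{1 - \rho_i} \;=\; 1 - \frac{\rho_{m+1}}{1 - \rho_i}.
\]

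Since $\rho_{m+1} > 0$ and $\rho_i \in (0,1)$ (indeed $\rho_i \leq 1-\rho_{m+1} < 1$), the subtracted term is strictly positive, so each row sum of $|S|$ is strictly less than $1$. Taking the maximum over $i$ yields $\| S \|_\infty < 1$, and therefore $\mathrm{spr}(S) < 1$ as claimed. I do not anticipate any serious obstacle: the whole argument rests on the same $\epsilon$-induced strict inequality already exploited in Claim~\ref{claim:hessian_nonsingular}, and the only non-routine ingredient is the recognition that the mass ``leaked'' to the phantom document is exactly what forces the strict bound. An essentially equivalent route via Gershgorin's circle theorem (noting that $S_{ii}=0$, so every eigenvalue of $S$ lies in a disk of radius $\sum_{j\neq i}|S_{ij}|$ centered at $0$) would yield the same conclusion.
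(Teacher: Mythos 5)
Your proof is correct and follows essentially the same route as the paper's: both bound the spectral radius by the induced infinity norm and show each row sum equals $1 - \epsilon'/(1-\rho_i) < 1$, where your $\rho_{m+1}$ is exactly the paper's $\epsilon' = \epsilon/(\sum_k e^{f_k} + \epsilon)$. No issues.
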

\begin{proof}
A complete proof is presented in the appendix. $S$ is a square matrix with nonnegative entries. By the Perron-Frobenious theorem, its spectral radius is bounded above by the maximum row-wise sum of entries, which, in $S$, is strictly less than 1.
\end{proof}

Claim~\ref{claim:spectral_radius} allows us to apply Neumann's result to approximate $(I - S)^{-1}$ as follows:
\begin{equation*}
    (I - S)^{-1} = \sum_{k = 0}^{\infty} S^k \approx I + S + S^2.
\end{equation*}
Using this result, we may approximate $H^{-1}$ as follows:
\begin{equation*}
    H^{-1} = (I - S)^{-1} D^{-1} \approx (I + S + S^2) D^{-1}
\end{equation*}

With that, we can finally calculate the update rule in Newton's method which requires the quantity $H^{-1}\nabla$:
\begin{align*}
    &(H^{-1}\nabla)_k = \sum_i H^{-1}_{ki} \nabla_i \\
    &\approx \sum_i (I + S + S^2)_{ki} (D^{-1} \nabla)_i \\
    &= \sum_i (I + S + S^2)_{ki} \frac{-\phi_i + \rho_i}{\rho_i (1 - \rho_i)} \\
    &= \underbrace{ \frac{-\phi_k + \rho_k}{\rho_k (1 - \rho_k)} }_{(ID^{-1}\nabla)_k} + \underbrace{ \frac{1}{1-\rho_k} \sum_{i \neq k} \frac{-\phi_i + \rho_i}{1 - \rho_i} }_{(SD^{-1}\nabla)_k} + \sum_{i \neq k}  \frac{\rho_i (SD^{-1}\nabla)_i}{1 - \rho_k}\\
    &= \frac{-\phi_k + \rho_k + \rho_k \sum_{i \neq k} \frac{-\phi_i + \rho_i}{1 - \rho_i} + \rho_k \sum_{i \neq k}  \rho_i (SD^{-1}\nabla)_i}{\rho_k(1 - \rho_k)}.
\end{align*}

\section{Experiments}\label{sec:experiments}
We are largely interested in a comparison of (a) the overall performance of ListNet, $\lambdamart$, and $\xendcg$ on benchmark learning-to-rank datasets, and (b) their robustness to various types and degrees of noise as a proxy to complexity. In this section, we describe our experimental setup and report our empirical findings.

It is important to note that there is an extensive list of published work~\cite{BruchApproxSIGIR2019,Zhuang:arxiv:2005.02553,Pasumarthi:arxiv:1910.09676,Bruch:wsdm:2020} that compare learning-to-rank algorithms on benchmark datasets we use in this work. We rely on prior research and do not include methods that have been shown repeatedly to be weaker than $\lambdamart$, including BoltzRank~\cite{volkovs:icml:2009}, ListMLE~\cite{xia2008listwise}, Position-Aware ListMLE~\cite{Lan:uai:2014}, SoftRank~\cite{Taylor+al:2008}, ApproxNDCG~\cite{BruchApproxSIGIR2019,qin2010general}, or other direct optimization methods~\cite{metzler2005directMaximization,Xu2008DirectlyOptimizingLTR}.

\subsection{Datasets}\label{sec:experiments:datasets}
We conduct experiments on two publicly available benchmark datasets: MSLR Web30K~\cite{DBLP:journals/corr/QinL13} and Yahoo! Learning to Rank Challenge Set 1~\cite{chapelle2011yahoo}. Web30K contains roughly 30,000 examples, with an average of 120 documents per example. Documents are represented by 136 numeric features. Yahoo! also has about the same number of examples but with an average of 24 documents per example, each represented by 519 features. Documents in both datasets are labeled with graded relevance from 0 to 4 with larger labels indicating a higher relevance.

From each dataset, we sample training (60\%), validation (20\%), and test (20\%) examples, and train and compare models on the resulting splits. We repeat this procedure 100 times and obtain mean NDCG at different rank cutoffs for each trial. We subsequently compare the ranking quality between pairs of models and determine statistical significance of differences using a paired \emph{t}-test.

During evaluation, we discard examples with no relevant documents. There are 982 and 1,135 such examples in the Web30K and Yahoo! datasets. The reason for ignoring these examples during evaluation is that their ranking quality can be arbitrarily 0 or 1, which only skews the average.

\begin{table}[t]
\caption{NDCG (percentage) on test sets at rank cutoffs 5 and 10, averaged over 100 randomized trials. In each trial, training, validation, and test sets are sampled from the datasets. The differences at all rank cutoffs between all models are statistically significant according to a paired \emph{t}-test ($\alpha=0.01$).}
\label{table:model_comparison}
\vskip 0.15in
\begin{center}
\begin{sc}
\begin{tabular}{lcc|cc}
& \multicolumn{2}{c}{Web30K} & \multicolumn{2}{c}{Yahoo!} \\
\toprule
Model & @5 & @10 & @5 & @10 \\
\midrule
ListNet & 47.68 & 49.76 & 71.76 & 76.52 \\
$\lambdamart$ & 48.08 & 49.94 & 73.00 & 77.49 \\
$\xendcgmart$ & 48.23 & 50.27 & 73.37 & 77.84 \\
\bottomrule
\end{tabular}
\end{sc}
\end{center}
\end{table}

\subsection{Models}
We train $\lambdamart$ models using LightGBM~\cite{lightgbm2017nips}. The hyperparameters are guided by previous work~\cite{lightgbm2017nips, WangLambdaLoss,BruchAnalysisICTIR2019}. For Web30K, max\_bin is 255, learning\_rate is 0.02, num\_leaves is 400, min\_data\_in\_leaf is 50, min\_sum\_hessian\_in\_leaf is set to 0, $\sigma$ is 1, and lambdamart\_norm is set to false. For Yahoo!, num\_leaves is 200 and min\_data\_in\_leaf is 100. We use NDCG@5 to select the best models on validation sets and fix early stopping round to 50 up to 500 trees.

We also implemented ListNet and $\xendcg$ in LightGBM.~\footnote{Available at github.com/microsoft/LightGBM.} As noted earlier, by fixing the hypothesis space to gradient boosted trees, we aim to strictly compare the performance of the loss functions and shield our analysis from any effect the hypothesis space may have on convergence and generalization. An additional reason for choosing gradient boosted trees is that, recent evidence~\cite{BruchApproxSIGIR2019,Zhuang:arxiv:2005.02553,Pasumarthi:arxiv:1910.09676} confirm their superior performance against other hypothesis spaces such as deep neural networks, at least on the benchmark datasets we use in this work. We refer to the tree-based model trained by optimizing $\xendcg$ as $\xendcgmart$. For training purposes, we use the same hyperparameters above.

Finally, we must address the choice for $\bm{\gamma}$ in $\xendcgmart$. In this work, we simplify the choice by sampling $\bm{\gamma}$ uniformly from $[0,\,1]^m$ for every training example (with $m$ documents) and at every iteration of boosting. We leave a detailed examination of the effect of this parameter to a future study.

\subsection{Ranking Quality}
\begin{figure}[t]
\begin{center}
\centerline{
\subfloat[Web30K]{
\includegraphics[height=1.3in]{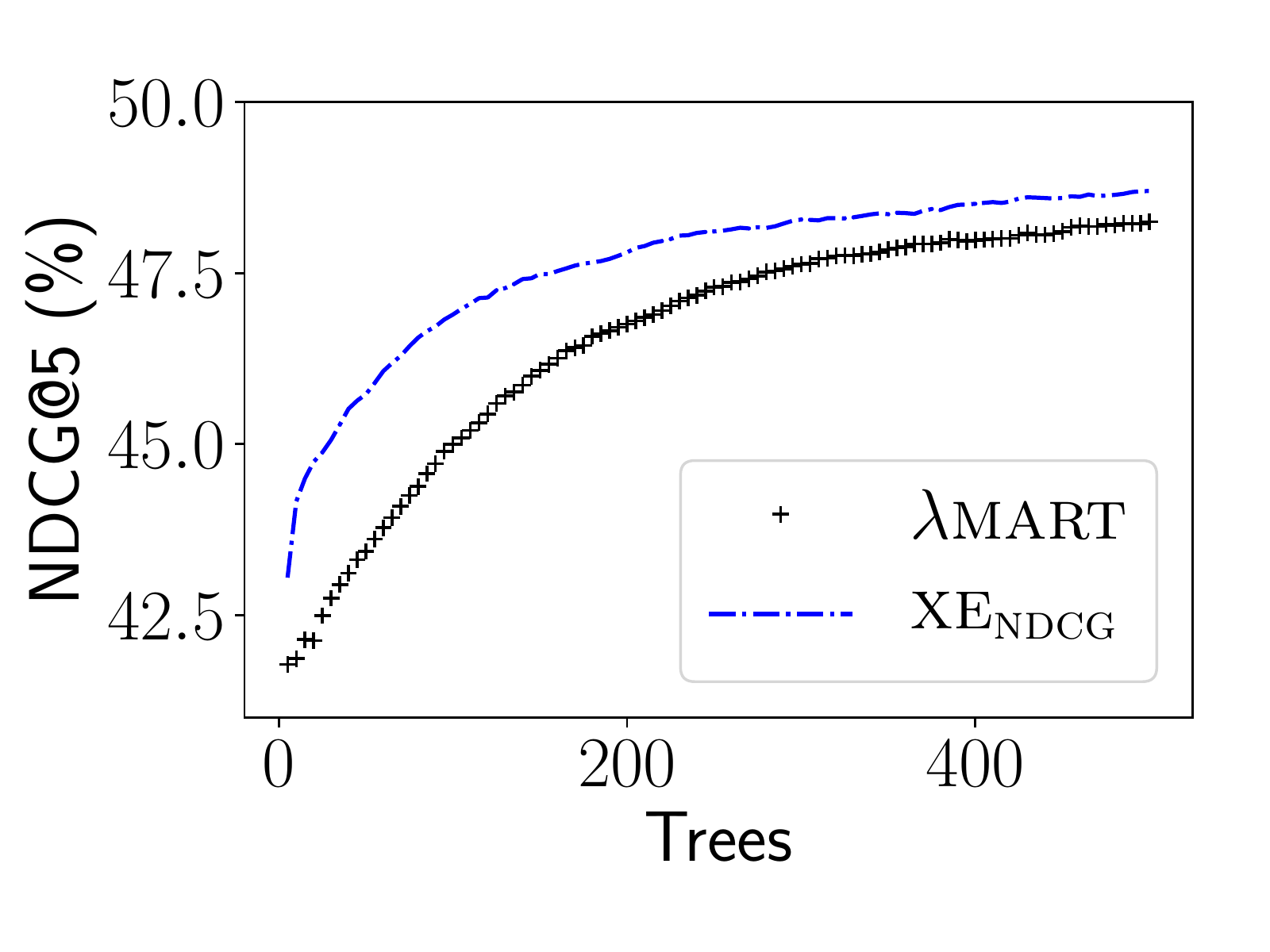}}
\hspace{-1em}
\subfloat[Yahoo!]{
\includegraphics[height=1.3in]{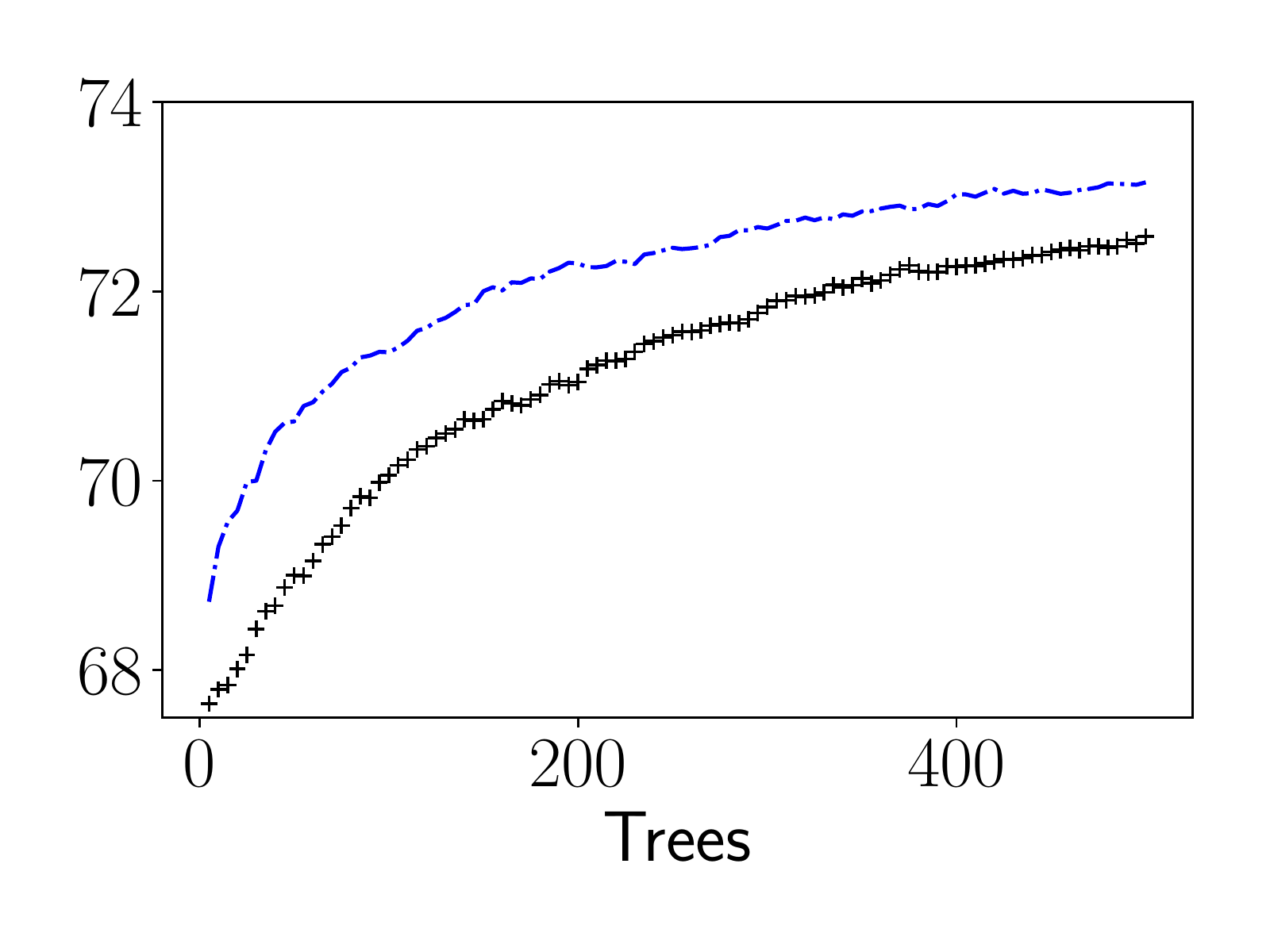}}
}
\caption{NDCG@5 on validation sets during training of $\lambdamart$ and $\xendcgmart$ in a representative trial.}
\label{fig:convergence}
\end{center}
\end{figure}

We compare the ranking quality of the three models under consideration. We report model quality by measuring average NDCG at rank cutoffs 5 and 10. As noted earlier, we also measure statistical significance in the difference between model qualities using a paired \emph{t}-test with significance level set to $\alpha=0.01$. Our results are summarized in Table~\ref{table:model_comparison}.

From Table~\ref{table:model_comparison}, we observe that ListNet consistently performs poorly across both datasets. The quality gap between ListNet and $\lambdamart$ is statistically significant at all rank cutoffs. This observation is in agreement with past studies~\cite{BruchAnalysisICTIR2019}.

On the other hand, our proposed $\xendcgmart$ yields a significant improvement over ListNet. This observation holds consistently across both datasets and rank cutoffs and lends support to our theoretical findings in previous sections.

Not only does $\xendcgmart$ outperform ListNet, its performance surpasses that of $\lambdamart$'s. While $\xendcgmart$'s gain over $\lambdamart$ is smaller than its gap with ListNet, the differences are statistically significant. This is an encouraging result: $\xendcgmart$ is not only theoretically sound and is equipped with better properties, it also performs well empirically compared to the state-of-the-art algorithm.

\begin{figure*}[t]
\begin{center}
\centerline{
\subfloat[Web30K]{
\includegraphics[height=1.4in]{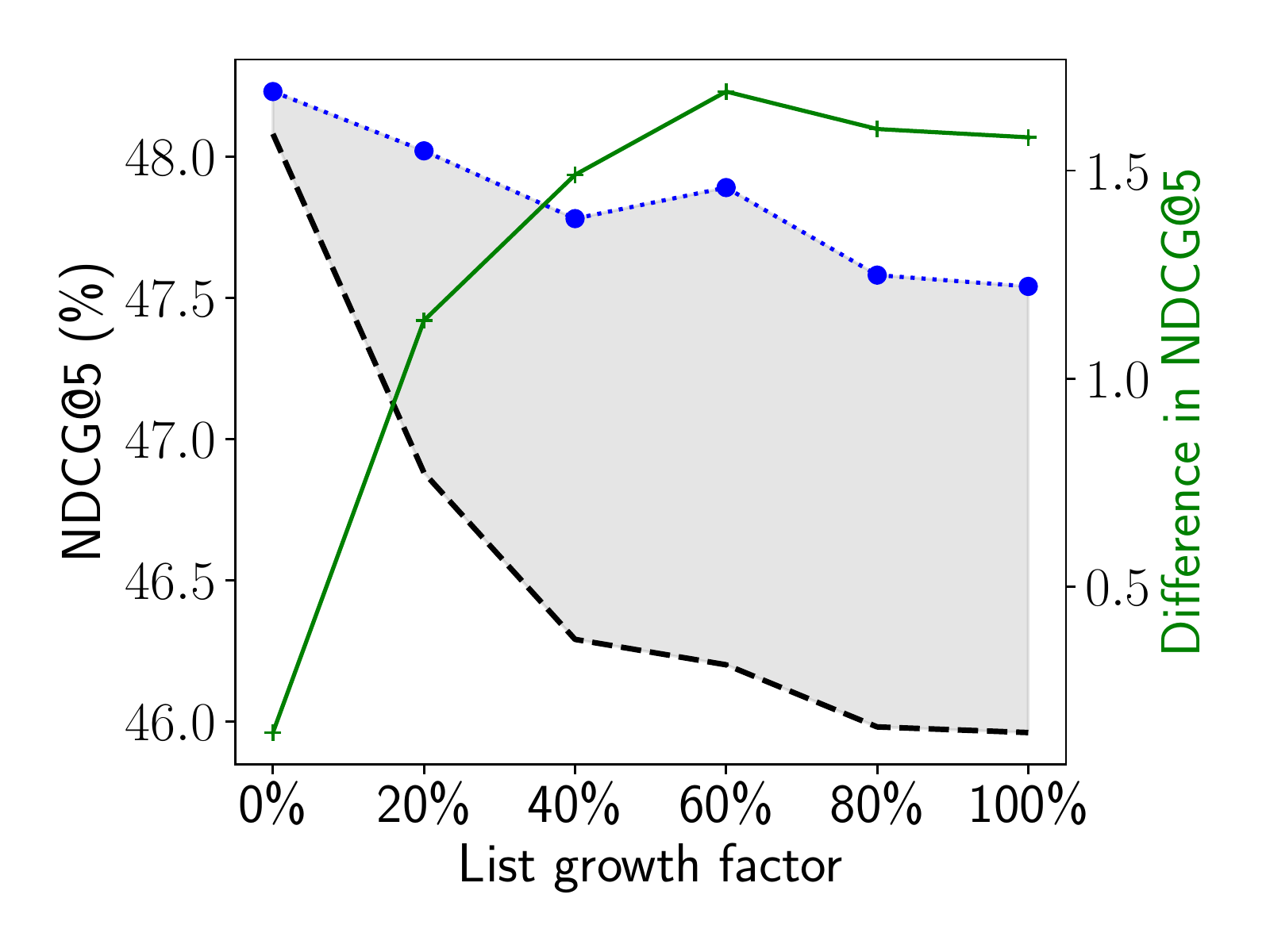}\label{fig:robustness:augmented_list_web30k}}
\hspace{-1em}
\subfloat[Yahoo!]{
\includegraphics[height=1.4in]{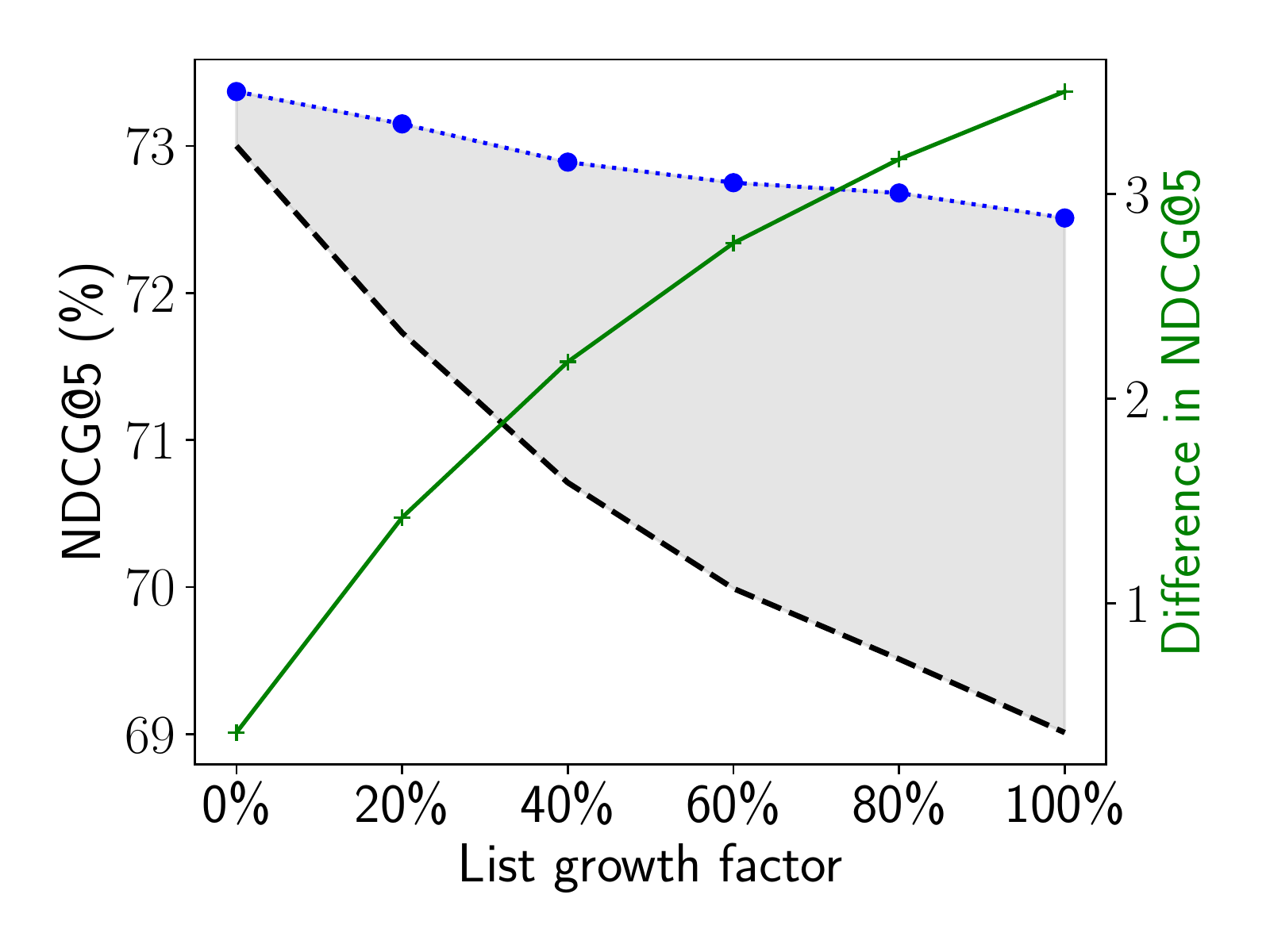}\label{fig:robustness:augmented_list_yahoo}}
\hspace{-1em}
\subfloat[Web30K]{
\includegraphics[height=1.4in]{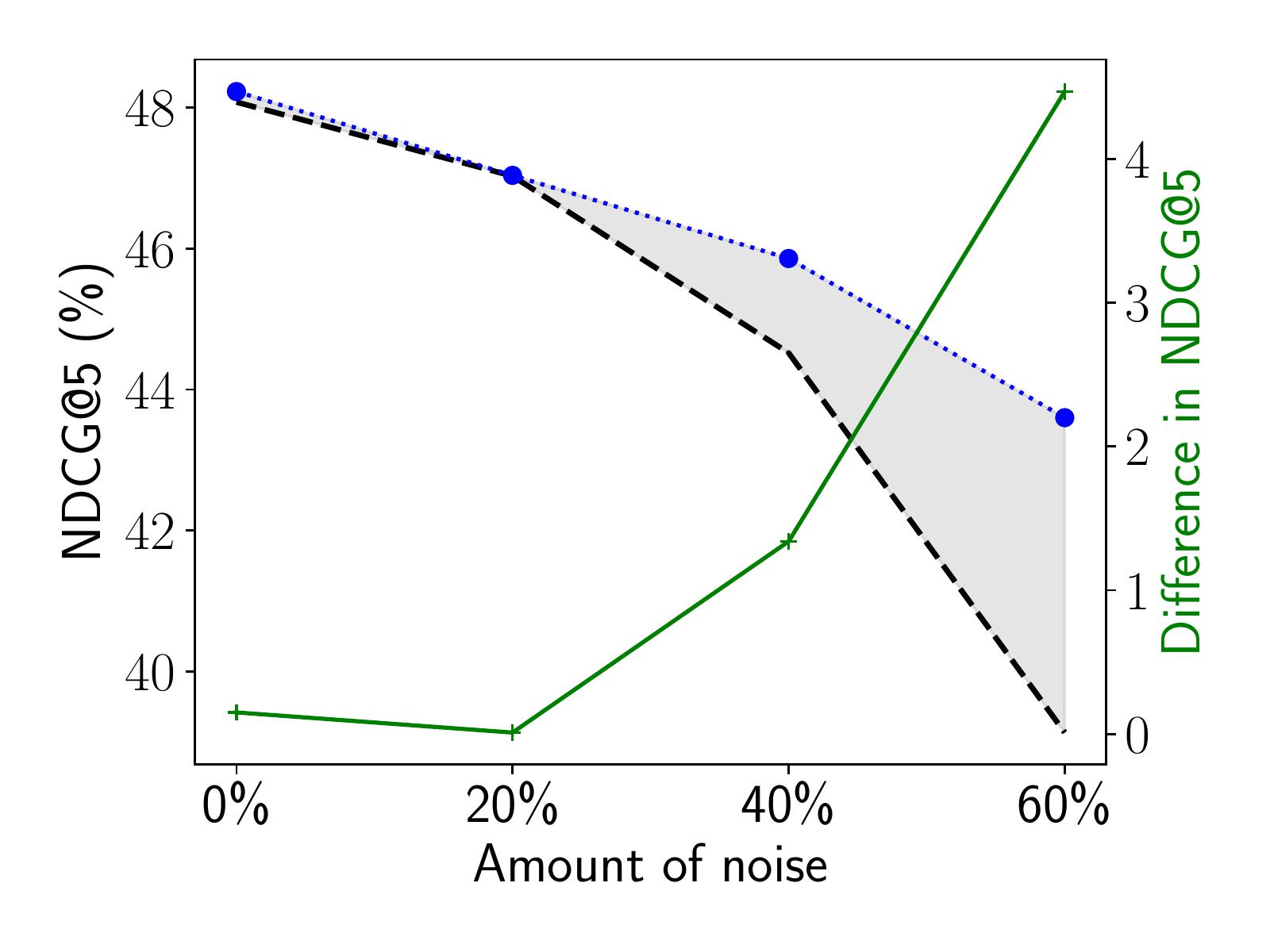}\label{fig:robustness:noise_web30k}}
\hspace{-1em}
\subfloat[Yahoo!]{
\includegraphics[height=1.4in]{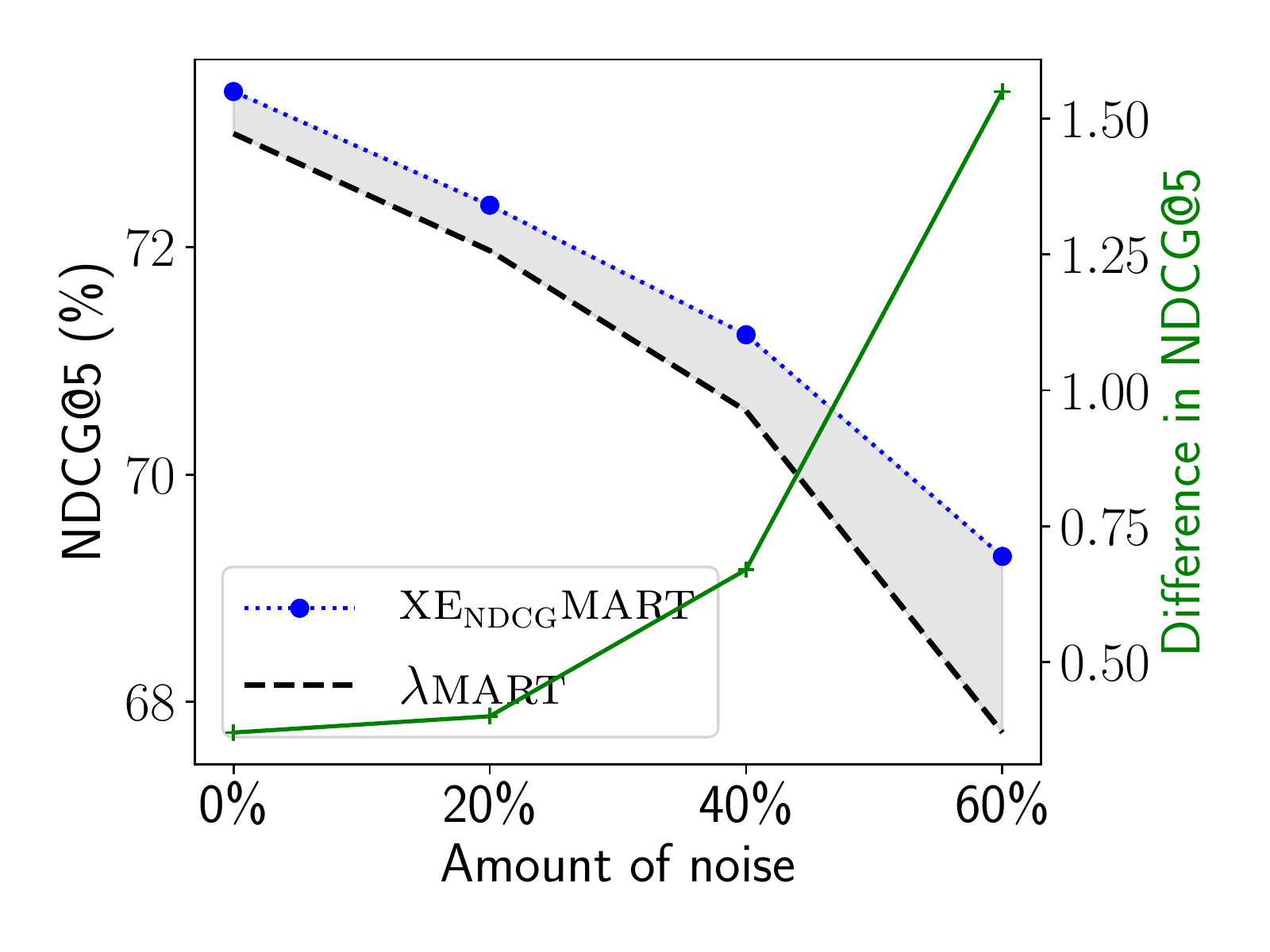}\label{fig:robustness:noise_yahoo}}
}
\caption{Mean NDCG@5 on test sets, averaged over 100 trials. To reduce clutter, legends appear only in the rightmost figure. In each trial, training, validation, and test sets are sampled from the dataset. In (a) and (b), training examples are augmented by additional (randomly sampled) negative documents. For example, the data point at ``40\%'' indicates a 40\% increase in the number of documents for every example. In (c) and (d), a percentage of relevance labels are set to a random value. The solid (green) lines show the difference in NDCG@5 between the two models in absolute terms.}
\label{fig:robustness}
\end{center}
\end{figure*}

A notable difference between $\lambdamart$ and $\xendcgmart$ is in their convergence rate. Figure~\ref{fig:convergence} plots NDCG@5 on validation sets as more trees are added to the ensemble. To avoid clutter, the figure illustrates just one trial (out of 100) but we note that we observe a similar trend across trials. From Figure~\ref{fig:convergence}, it is clear that $\xendcgmart$ outperforms $\lambdamart$ by a wider margin when the number of trees in the ensemble is small. This property is important in latency-sensitive applications where a smaller ensemble is preferred.

\subsection{Robustness to Noise in Graded Labels}\label{sec:experiments:robustness}
We now turn to model robustness where we perform a comparative analysis of the effect of noise on $\lambdamart$ and $\xendcgmart$. The robustness of a ranking model to noise is important in practice due to uncertainty in relevance labels, whether judged by human experts or deduced from user feedback such as clicks. We expect $\lambdamart$ to overfit to noise and be less robust due to its higher model complexity---see findings in Section~\ref{sec:proposed_method:complexity}. As such, we expect the performance of $\lambdamart$ to degrade at a higher pace than $\xendcgmart$ as we inject more noise into the dataset. We put this hypothesis to the test through two types of experiments.

In the first series of experiments, we focus on the effect of enlarging the document list per training example by the addition of noise. In particular, we augment document lists for training examples with negative documents using the following protocol. For every training example $(\bm{x}, \bm{y})$, we sample from the collection of all documents in the training set excluding $\bm{x}$ to form $\bm{x}^{\prime} = \{\overline{x}\, |\, (\overline{x},\, \overline{y}) \sim (\overline{\bm{x}},\, \overline{\bm{y}}),\, (\overline{\bm{x}},\, \overline{\bm{y}}) \sim \Psi \setminus (\bm{x},\, \bm{y})\}$). Subsequently, we augment $\bm{x}$ by adding $\bm{x}^\prime$ as non-relevant documents: $(\bm{x} \oplus \bm{x}^{\prime}, \bm{y} \oplus \bm{0})$, where $\oplus$ denotes concatenation. Finally, we train models on the resulting training set and evaluate on the (unmodified) test set. As before, we repeat this experiment 100 times.

We illustrate NDCG@5 on the test sets averaged over 100 trials and for various degrees of augmentation in Figures~\ref{fig:robustness:augmented_list_web30k} and~\ref{fig:robustness:augmented_list_yahoo}. The trend confirms our hypothesis: On both datasets, the performance of $\lambdamart$ degrades more severely as more noise is added to the training set, increasing the number of documents per example ($m$). This effect is more pronounced on the Yahoo! dataset where $m$ is on average small. We note that the increase in NDCG@5 of $\xendcgmart$ from the 40\% mark to 60\% on Web30K is not statistically significant.

In another series of experiments we perturb relevance labels in the training set. To that end, for each training example $(\bm{x}, \bm{y})$, we randomly choose a subset of its documents and set their labels (independently) to 0 through 4 with decreasing probabilities: $p(0)=.5,\, p(1)=.2,\, p(2)=.15,\, p(3)=.1,\, p(4)=.05$. We train models on the perturbed training set and evaluate on the (unmodified) test set. As before, we repeat this experiment 100 times.

The results are shown in Figures~\ref{fig:robustness:noise_web30k} and~\ref{fig:robustness:noise_yahoo}. We observe once more that $\lambdamart$'s performance degrades more rapidly with more noise. This behavior is more pronounced on Web30K.

\subsection{Robustness to Noise in Simulated Clicks}
In Section~\ref{sec:experiments:robustness}, we examined the behavior of $\lambdamart$ and $\xendcgmart$ in the presence of noise on datasets with explicit relevance judgments. In this section, we provide an analysis of the robustness of the two algorithms on a simulated click dataset where noise occurs more naturally (e.g., where a user clicks a non-relevant document by accident).

We follow the procedure proposed in~\cite{Joachims:wsdm:2017} to simulate a user in the cascade click model~\cite{Craswell:wsdm:2008}. In the cascade click model, when presented with a ranked list of documents, a user scans the list sequentially from the top and clicks a document according to a \emph{click probability} distribution---the probability that a document is clicked given its relevance label. We assume the user is persistent in that they continue to examine the list until either a document is clicked or they reach the end of the list.

We construct click datasets as follows. We first create training and validation splits using the procedure of Section~\ref{sec:experiments:datasets}. Given a training (or validation) example $(\bm{x}, \bm{y})$ consisting of $m$ documents and relevance labels, we shuffle its elements and sequentially scan the resulting list to produce clicks using the cascade click model. We stop at the first occurrence of a click and return the list up to the first click as an ``impression.'' We create 10 impressions per training example to form our click dataset. Finally, we train ranking models on the click dataset and evaluate on the original (non-click) test set. We repeat this experiment 20 times and measure mean NDCG.

In our experiments, we adjust the click probability of non-relevant documents to simulate noise in the training set. We begin with click probabilities set to $(.05, .3, .5, .7, .95)$ for relevance labels 0 through 4, respectively. That is, in this setting, a non-relevant document is clicked 5\% of the time. In subsequent experiments, we increase the click probability of non-relevant documents by $.05$.

The results of our experiments on Web30K and Yahoo! are plotted in Figure~\ref{apx:fig:robustness_click_prob}.  Clearly, the performance of $\xendcgmart$ on the test sets is consistently better than $\lambdamart$ for all levels of noise in the training set. We note that all differences are statistically significant according to a paired \emph{t}-test ($\alpha=0.01$). Additionally, as with previous experiments, the performance of $\xendcgmart$ is more robust to noise: its performance degrades more slowly than $\lambdamart$.

\begin{figure}
\begin{center}
\centerline{
\subfloat[Web30K]{\includegraphics[height=1.35in]{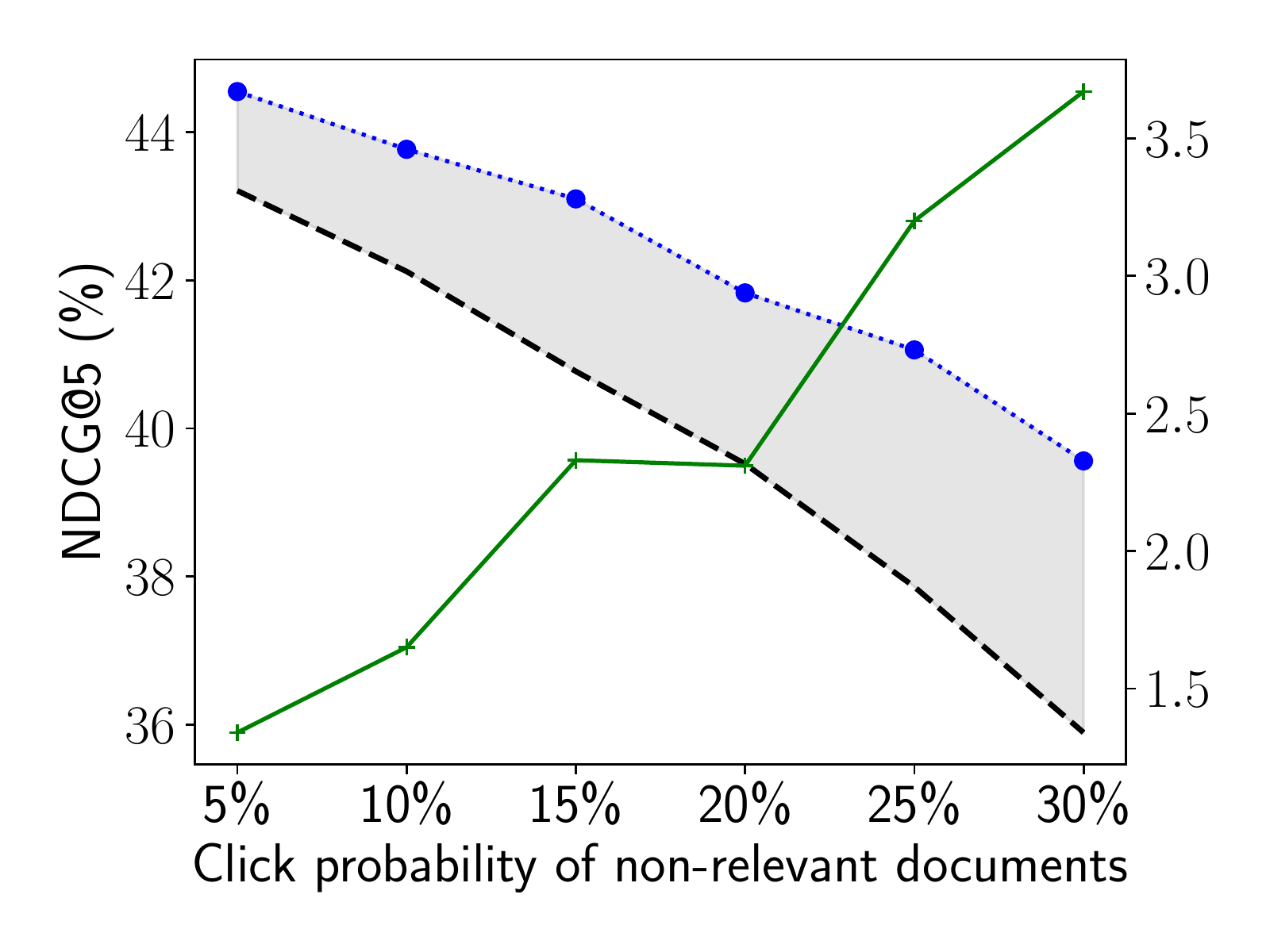}}
\hspace{-1em}
\subfloat[Yahoo!]{\includegraphics[height=1.35in]{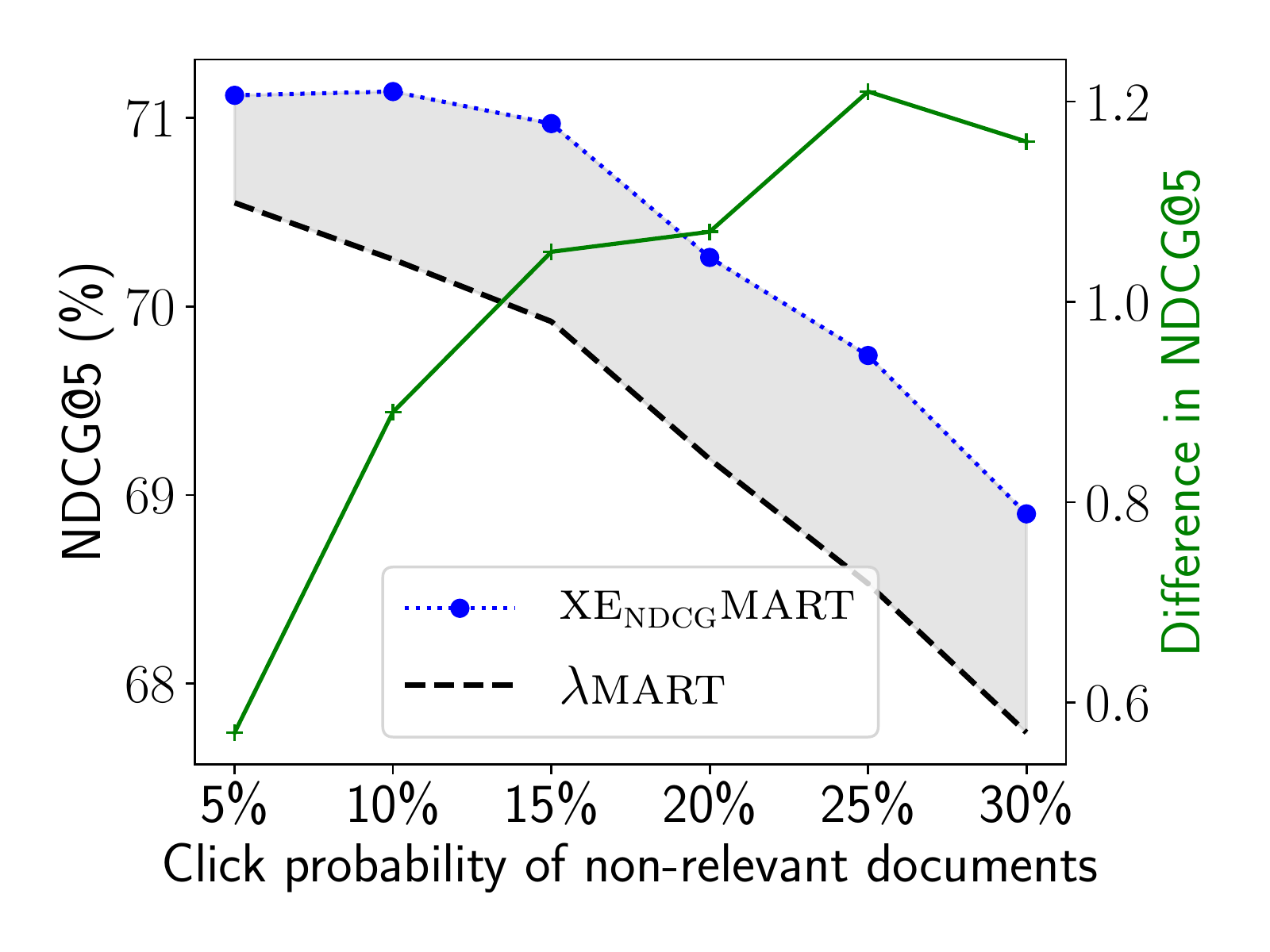}}
}
\caption{Mean NDCG@5 on test sets, averaged over 20 trials. In each trial, training and validation sets are turned into clicks using the cascade click model and a random base ranker. The horizontal axis indicates the click probability of non-relevant documents. The solid (green) lines show the difference in NDCG@5 between the two models.}
\label{apx:fig:robustness_click_prob}
\end{center}
\end{figure}

\balance
\section{Conclusion}\label{sec:conclusion}
In this work, we presented a novel ``listwise'' learning-to-rank loss function, $\xendcg$, that, unlike existing methods bounds NDCG---a popular ranking metric---in a general setting. We contrasted our proposed loss function with $\lambdamart$ and showed its superior theoretical properties. In particular, we showed that the loss function optimized by $\lambdamart$ (if it exists), has a higher complexity with a Lipschitz constant that is a function of the number of documents, $m$. In contrast, the complexity of $\xendcg$ is invariant to $m$.

Furthermore, we proposed a model that optimizes $\xendcg$ to learn an ensemble of gradient-boosted decision trees which we refer to as $\xendcgmart$. Through extensive experiments on two benchmark learning-to-rank datasets, we demonstrated that our proposed method performs better than ListNet and $\lambdamart$ in terms of quality and robustness. We showed that, $\xendcgmart$ is less sensitive to the number of documents and is more robust in the presence of noise. Finally, our experiments suggest that the performance gap between $\xendcgmart$ and $\lambdamart$ widens if we constrain the size of the learned ensemble. Better performance with fewer trees is important for latency-sensitive applications.

As a future direction, we are interested in an examination of the bound and its effect on the convergence and consistency of $\xendcg$. In particular, in this work, we treated $\bm{\gamma}$'s as hyperparameters. However, more effective strategies for solving $\bm{\gamma}$'s and obtaining tighter bounds during boosting remain unexplored. Furthermore, given its robustness to label noise (implicit and explicit), we are also interested in studying $\xendcg$ in an online learning setting.

\section{Acknowledgements}
I extend my gratitude to Don Metzler for his in-depth review of an early draft of this work, and Qingyao Ai and Branislav Kveton for their feedback that led to much-needed clarifications. Special thanks go to Honglei Zhuang for allowing me to use him as a sounding board and for many engaging conversations on deeply technical aspects of the work. Finally, I am grateful to Masrour Zoghi and Ben Carterette for their thoughts on the experimental setup.

\appendix
\renewcommand{\thesection}{\Alph{section}}
\section{Appendix}
\subsection{Proof of Claim~\ref{claim:hessian_nonsingular}}

Using $\rho_r = \rho(f_r) = e^{f_r}/(\sum{e^{f_j} + \epsilon})$ to denote the score probability of the $r^\textup{th}$ document, the Hessian can be written as follows:
\begin{equation*}
    H_{ij} =
    \begin{cases}
      \rho_i (1 - \rho_i), & i = j \\
      -\rho_i \rho_j, & i \neq j
    \end{cases}
\end{equation*}

\begin{claim*}
The Hessian, as defined above, is positive definite.
\end{claim*}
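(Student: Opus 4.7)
The plan is to establish $H \succ 0$ via the Gershgorin circle theorem applied to the strict diagonal dominance that the main text has already verified. Since $H$ is real and symmetric, positive definiteness is equivalent to every eigenvalue being strictly positive. Gershgorin guarantees that every eigenvalue $\lambda$ lies in some disc $\{z \in \mathbb{C} : |z - H_{kk}| \leq \sum_{j \neq k}|H_{kj}|\}$. The calculation in Section~4.3 shows that, precisely because $\epsilon > 0$,
$$|H_{kk}| - \sum_{j \neq k}|H_{kj}| \;=\; \rho_k \cdot \frac{\epsilon}{\sum_\ell e^{f_\ell} + \epsilon} \;>\; 0,$$
so the diagonal dominance is strict. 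Combined with $H_{kk} = \rho_k(1-\rho_k) > 0$ (using $0 < \rho_k < 1$), each Gershgorin disc for $H$ is contained in the open right half-plane, whence every eigenvalue is positive and $H \succ 0$.

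A self-contained alternative that avoids citing Gershgorin is to compute the quadratic form directly. For any nonzero $v \in \mathbb{R}^m$,
$$v^\top H v \;=\; \sum_i \rho_i v_i^2 \;-\; \Bigl(\sum_i \rho_i v_i\Bigr)^{\!2}.$$
Applying Cauchy--Schwarz to the pair $(\sqrt{\rho_i}\,v_i)_i$ and $(\sqrt{\rho_i})_i$ yields $(\sum_i \rho_i v_i)^2 \leq s \sum_i \rho_i v_i^2$, where $s := \sum_i \rho_i = (\sum_i e^{f_i})/(\sum_i e^{f_i}+\epsilon) < 1$. Hence $v^\top H v \geq (1-s)\sum_i \rho_i v_i^2 > 0$ for $v \neq 0$, again giving $H \succ 0$.

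The only delicate point—and in my view the main obstacle to keep in sight—is making the role of $\epsilon$ explicit. Without the regularization one has $s = 1$, the all-ones vector lies in the kernel, and $H$ degenerates to merely positive semi-definite; this is exactly the singularity the author introduced $\epsilon$ to repair. Both routes above expose this dependence transparently, so the substantive content of the proof reduces to the single observation that $\epsilon > 0$ forces $s < 1$ (equivalently, that the diagonal dominance is strict); once that is granted, either Gershgorin or Cauchy--Schwarz closes the argument in one line, and invertibility follows immediately from positive definiteness.
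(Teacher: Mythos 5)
Your proposal is correct. Your first route is essentially the paper's argument: the paper establishes strict diagonal dominance with exactly the computation you cite, proves nonsingularity by the standard contradiction argument (picking the index of largest $|u_i|$), and then invokes the Gershgorin circle theorem together with the positivity of the diagonal to conclude $H \succ 0$; you simply go straight to Gershgorin and skip the separate nonsingularity step, which is fine since positive definiteness subsumes invertibility. Your second route is genuinely different and, to my mind, preferable: the identity $v^\top H v = \sum_i \rho_i v_i^2 - (\sum_i \rho_i v_i)^2$ is correct for this Hessian, and the Cauchy--Schwarz step with $s = \sum_i \rho_i < 1$ gives $v^\top H v \geq (1-s)\sum_i \rho_i v_i^2 > 0$ in one line, with no appeal to Gershgorin or to the Levy--Desplanques theorem. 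What this buys is transparency about the role of $\epsilon$: it isolates the single quantity $1-s = \epsilon/(\sum_\ell e^{f_\ell} + \epsilon)$ as the margin of definiteness and makes visible that at $\epsilon = 0$ the all-ones vector lies in the kernel --- precisely the degeneracy the paper's $\epsilon$-perturbation is designed to remove, a point the paper's diagonal-dominance route establishes only implicitly through the strict inequality. The paper's route, for its part, is the one that sets up the machinery (diagonal dominance, the splitting $H = D(I-S)$) reused immediately afterward in Claim 4 on the spectral radius of $S$, so it is not redundant in context.
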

\begin{proof}
We first prove that $H$ is strictly diagonally dominant. By definition, a square matrix $A$ is strictly diagonally dominant if the following holds for all $i$: $|A_{ii}| > \sum_{j \neq i} |A_{ij}|$. Observe that:
\begin{align*}
    |H_{kk}| &= \rho_k(1- \rho_k) = \rho_k (1 - \frac{e^{f_k}}{\sum e^{f_j} + \epsilon}) \\
    &= \rho_k \frac{ \sum_{j \neq k} e^{f_j} + \epsilon }{ \sum e^{f_j} + \epsilon }
      > \rho_k \sum_{j \neq k} \rho_j = \sum_{j \neq k} |H_{kj}|.
\end{align*}

Using this property, we now prove nonsingularity of $H$ by contradiction. Assume there exists a vector $\bm{u} \neq \bm{0}$ such that $H\bm{u} = \bm{0}$. Let $i$ be the index of the $u_i$ with the largest magnitude: $i = \argmax_{i} |u_i|$. Then:
\begin{align*}
    \sum_{j} H_{ij} u_j = 0 & \Rightarrow H_{ii} u_i = - \sum_{j \neq i} H_{ij} u_j
    \overset{u_i \neq 0}{\Rightarrow} H_{ii} = - \sum_{j \neq i} \frac{u_j}{u_i} H_{ij} \\
    & \Rightarrow |H_{ii}| \leq \sum_{j \neq i} |\frac{u_j}{u_i} H_{ij}|
    \Rightarrow |H_{ii}| \leq \sum_{j \neq i} |H_{ij}|,
\end{align*}
which is a contradiction. This concludes the proof for nonsingularity of the Hessian, which is already sufficient for subsequent results. However, as a consequence of the Gershgorin circle theorem it can further be shown that, because the diagonal elements of $H$ are strictly positive, $H$ is positive definite.
\end{proof}

\subsection{Proof of Claim~\ref{claim:spectral_radius}}

Use $\rho_r = \rho(f_r) = e^{f_r}/(\sum{e^{f_j} + \epsilon})$ to denote the score probability of the $r^\textup{th}$ document. The nonnegative, square matrix $S$ in Claim~\ref{claim:spectral_radius} is defined as follows:
\begin{equation*}
    S_{ij} =
    \begin{cases}
    0, & i=j\\
    \rho_j/(1 - \rho_i), & i \neq j
    \end{cases}.
\end{equation*}

\begin{claim*}
The spectral radius of $S$ is strictly less than 1.
\end{claim*}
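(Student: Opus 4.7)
The plan is to bound the spectral radius by the maximum absolute row-sum of $S$ and then show that every row-sum is strictly less than $1$. Since $S$ is a square matrix with nonnegative entries, a standard fact (which one can obtain either directly as the inequality $\rho(S) \leq \|S\|_{\infty}$ for the operator $\infty$-norm, or as a consequence of the Perron--Frobenius theorem applied to the nonnegative matrix $S$) gives
\begin{equation*}
    \rho(S) \;\leq\; \max_{i} \sum_{j} |S_{ij}| \;=\; \max_{i} \sum_{j \neq i} \frac{\rho_j}{1 - \rho_i}.
\end{equation*}
So the task reduces to bounding each of these row-sums strictly by $1$.

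The key observation is that, because of the extra $\epsilon>0$ in the denominator of $\rho$, the $\rho_j$'s form a sub-probability vector rather than a probability vector. Concretely,
\begin{equation*}
    \sum_{j=1}^{m} \rho_j \;=\; \frac{\sum_{j} e^{f_j}}{\sum_{j} e^{f_j} + \epsilon} \;<\; 1,
\end{equation*}
and in particular $\sum_{j \neq i} \rho_j < 1 - \rho_i$ for every index $i$. Dividing by $1 - \rho_i > 0$ gives
\begin{equation*}
    \sum_{j \neq i} \frac{\rho_j}{1 - \rho_i} \;<\; 1.
\end{equation*}
Taking the maximum over $i$ yields $\rho(S) < 1$, as required.

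The argument is essentially self-contained, with no real obstacle beyond correctly identifying that the sub-stochasticity of $(\rho_j)$---which is exactly what the $\epsilon$-perturbation of Definition~\ref{def:xe_ndcg} buys us---is what forces a strict (rather than non-strict) inequality. If one wished to avoid invoking Perron--Frobenius directly, one could alternatively take any eigenpair $(\lambda, \bm{u})$ of $S$ with $|\lambda|=\rho(S)$, pick $k = \argmax_i |u_i|$, and use $\lambda u_k = \sum_{j\neq k} S_{kj} u_j$ together with the triangle inequality to conclude $|\lambda| \leq \sum_{j \neq k} S_{kj} < 1$; this is the natural parallel of the Gershgorin-style argument used in the proof of Claim~\ref{claim:hessian_nonsingular} and makes the proof entirely elementary.
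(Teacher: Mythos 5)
Your proposal is correct and follows essentially the same route as the paper: bound the spectral radius by the maximum absolute row-sum of the nonnegative matrix $S$, then use the sub-stochasticity of $(\rho_j)$ induced by the $\epsilon$-perturbation (the paper writes this out explicitly as $\sum_{j\neq i}\rho_j = 1-\rho_i-\epsilon^\prime$) to make each row-sum strictly less than $1$. The only difference is cosmetic---you phrase the slack as an inequality where the paper computes it exactly---so no further comment is needed.
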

\begin{proof}
Note that, for all eigenvalues $\lambda_i,\, 1\leq i \leq n$ of an $n\times n$ matrix $A$, their corresponding eigenvectors $\bm{u}_i$, and for any induced operator norm $\| \cdot \|$ we have that:
\begin{align*}
    \| A \| = \sup_{\bm{x}} \frac{\|A\bm{x}\|}{\|\bm{x}\|} \geq \frac{\|A\bm{u}_i\|}{\|\bm{u}_i\|} = \frac{\|\lambda_i\bm{u}_i\|}{\|\bm{u}_i\|} = |\lambda_i|,\, \forall i.
\end{align*}
This is, in particular, true for the infinity norm:
\begin{equation*}
    \|A\|_\infty = \max_i \sum_{j} |A_{ij}|.
\end{equation*}

The inequality above holds for the spectral radius of $A$ which is defined as the largest absolute value of $A$'s eigenvalues: $\max_i |\lambda_i|$. Therefore, we have that the spectral radius of $S$ is bounded above by:
\begin{align*}
    \max_i \sum_{j} |S_{ij}| &= \max_i \sum_j \frac{\rho_j}{1 - \rho_i}
    = \max_i \frac{\sum_{j \neq i} \rho_j}{1 - \rho_i} \\
    &= \max_i \frac{1 - \rho_i - \epsilon^\prime}{1 - \rho_i}
    = \max_i 1 - \frac{\epsilon^\prime}{1 - \rho_i} < 1,
\end{align*}
where $\epsilon^\prime = \epsilon/\sum e^{f_k} + \epsilon$. That completes the proof.
\end{proof}

\newpage
\balance
\bibliographystyle{ACM-Reference-Format}
\bibliography{main} 

\end{document}